\newcommand{\norm}[1]{\left\lVert#1\right\rVert}
\DeclareMathOperator*{\argmin}{arg\,min}
\begin{document}
\fancyhead{}

\title{Unlearn What You Have Learned: Adaptive Crowd Teaching \\with Exponentially Decayed Memory Learners}

\author{Yao Zhou}
\orcid{1234-5678-9012}
\affiliation{%
  \institution{Arizona State University}
  \city{Tempe} 
  \state{Arizona} 
  \postcode{85281}
}
\email{yzhou174@asu.edu}

\author{Arun Reddy Nelakurthi}
\affiliation{%
  \institution{Arizona State University}
  \city{Tempe} 
  \state{Arizona} 
  \postcode{85281}
}
\email{arunreddy@asu.edu}

\author{Jingrui He}
\affiliation{%
  \institution{Arizona State University}
  \city{Tempe} 
  \state{Arizona} 
  \postcode{85281}
}
\email{jingrui.he@asu.edu}

\renewcommand{\shortauthors}{Yao Zhou, Arun Reddy Nelakurthi, Jingrui He}

\begin{abstract}
With the increasing demand for large amount of labeled data, crowdsourcing has been used in many large-scale data mining applications. However, most existing works in crowdsourcing mainly focus on label inference and incentive design. In this paper, we address a different problem of adaptive crowd teaching, which is a sub-area of machine teaching in the context of crowdsourcing. Compared with machines, human beings are extremely good at learning a specific target concept (e.g., classifying the images into given categories) and they can also easily transfer the learned concepts into similar learning tasks. Therefore, a more effective way of utilizing crowdsourcing is by supervising the crowd to label in the form of teaching. In order to perform the teaching and expertise estimation simultaneously, we propose an adaptive teaching framework named JEDI to construct the personalized optimal teaching set for the crowdsourcing workers. In JEDI teaching, the teacher assumes that each learner has an exponentially decayed memory. Furthermore, it ensures comprehensiveness in the learning process by carefully balancing teaching diversity and learner's accurate learning in terms of teaching usefulness. Finally, we validate the effectiveness and efficacy of JEDI teaching in comparison with the state-of-the-art techniques on multiple data sets with both synthetic learners and real crowdsourcing workers. 
\end{abstract}

%
%

\begin{CCSXML}
<ccs2012>
    <concept>
        <concept_id>10002951.10003260.10003282.10003296</concept_id>
        <concept_desc>Information systems~Crowdsourcing</concept_desc>
        <concept_significance>500</concept_significance>
    </concept>
    <concept>
        <concept_id>10010405.10010489.10010491</concept_id>
        <concept_desc>Applied computing~Interactive learning environments</concept_desc>
        <concept_significance>300</concept_significance>
    </concept>
    <concept>
        <concept_id>10010147.10010257.10010282.10010290</concept_id>
        <concept_desc>Computing methodologies~Learning from demonstrations</concept_desc>
        <concept_significance>100</concept_significance>
    </concept>
</ccs2012>
\end{CCSXML}

\ccsdesc[500]{Information systems~Crowdsourcing}
\ccsdesc[300]{Applied computing~Interactive learning environments}
\ccsdesc[100]{Computing methodologies~Learning from demonstrations}


\keywords{Crowd Teaching, Exponentially Decayed Memory, Human Learner}

\maketitle

\section{Introduction}
In many real-world applications, the performance of the learning models usually depends on the quality and the amount of labeled training examples. With the increasing attention on the large-scale data mining problems, the demand for large amount of labeled data also grows at an unprecedented scale. One of the most popular means of collecting the labeled data is through crowdsourcing platforms, such as Amazon Mechanical Turk, Crowdflower, etc. With the help of these crowdsourcing services, where the data is outsourced and labeled by a group of mostly unskilled online workers, the researchers and organizations are able to obtain large amount of label information within a short period of time at a low cost. However, the labels provided by these workers are often of low-quality due to the lack of expertise and lack of incentives, etc. In recent years, several works \cite{DBLP:journals/corr/ZhouLPMS15,DBLP:conf/nips/LiuPI12, DBLP:conf/sdm/ZhouYH17, TAC,M2VW} have been proposed to model and to estimate the expertise of the workers, and these approaches tend to improve the collective labeling quality by downweighting the votes from the weak annotators and trusting the experts. Another branch of crowdsourcing research \cite{DBLP:conf/icml/ShahZP15,DBLP:conf/icml/ShahZ16} focuses on the design of incentives that could motivate the workers to convey their knowledge more accurately by coupling it with a well-designed compensation mechanism. Despite the success of these works, they all omitted one important fact: human beings are extremely good at learning a specific target concept (e.g., classifying the images into given categories) and they can easily transfer the learned concepts into similar learning tasks especially when they have grasped certain prior knowledge regarding the original learning concept. Based on the above insightful observations, it is commonly assumed that a more effective way of utilizing crowdsourcing is by supervising the crowd to label in the form of teaching \cite{DBLP:conf/icml/SinglaBBKK14,DBLP:conf/cvpr/JohnsAB15}.

The crowdsourcing workers usually have a variety of expertise. Therefore, teaching them a certain concept and estimating their labeling abilities at the same time is a challenging problem in general. From the context of teaching, there is an emerging research direction named machine teaching \cite{MT_overview} which is the inverse problem of machine learning. Given the learners, the learning algorithm, and the target concept, machine teaching is concerned with a teacher who wants the learner to learn this target concept as fast as possible. Usually, the main principle of machine teaching is to improve the efficacy of the teachers either by minimizing the teaching effort (i.e., the teaching dimension \cite{DBLP:journals/jmlr/LiuZ16,DBLP:conf/aaai/Zhu15}, which is defined as the cardinality of the optimal teaching set), or by maximizing the converging speed \cite{DBLP:conf/icml/LiuDHTYSRS17} (i.e., the number of the teaching iterations to reach teaching optimum). In this work, we focus on the problem of adaptive crowd teaching, which is a sub-area of machine teaching in the context of crowdsourcing. In crowd teaching, the learners are the crowdsourcing workers and the teacher is the machine that guides the teaching procedure. This is similar to the computer tutoring system, where the teacher teaches by demonstrating the typical examples with answers to the students, and the teacher's goal is to help the students have good performance in similar tasks after tutoring. Very few works \cite{DBLP:conf/cvpr/JohnsAB15,DBLP:conf/icml/SinglaBBKK14} have been conducted to solve this problem, however, none of them have considered the human memory decay during learning, which has been shown to strongly affect real human learner's categorization decisions~\cite{DBLP:conf/nips/PatilZKL14, memoryretrieval, forgettingcurve}.

We propose an adaptive teaching paradigm based on the assumption that the learners have exponentially decayed memories \cite{forgettingcurve}. Within our proposed paradigm, the teacher can gradually construct a personalized optimal teaching sequence for each learner by recommending a teaching example and querying the response from the learner interactively with multiple teaching iterations. Moreover, our teaching strategy ensures the teaching sequence diversity to help the learner develop more comprehensive knowledge on the learning task, and guarantees the teaching sequence usefulness to increase the learner's learning accuracy. To be specific, the main contributions of our work are summarized as follows:
\begin{itemize}
    \item \textbf{Formulation}: We formulate the crowd teaching as a pool-based searching problem which performs teaching and expertise estimation simultaneously. The key idea of this teaching framework is to impose the trade-off between the principles of maximizing teaching usefulness and teaching diversity.
    \item \textbf{Models}: Each learner is assumed to be a gradient descent based model with exponentially decayed memory, and the teacher is formulated to minimize the discrepancy between the learner's current concept and the target concept. We also provide theoretical analyses to study the quality of the teaching examples.
    \item \textbf{Experiments}: We have provided a visualization of our teaching framework on a two-dimensional toy data set, and an exhaustive comparison on one synthetic data set and two real-world data sets using simulated learners. Furthermore, we have conducted a teaching experiment on real human learners and compared our results with state-of-the-art techniques with promising results.
    \item \textbf{Demonstration}: We have built a web-based teaching interface\footnote{A demo of this teaching interface is available at: \href{http://198.11.228.162:9000/memory/index/}{\underline{JEDI-Web-Demo}}. The latest source code is available at: \href{https://github.com/collwe/JEDI-Crowd-Teaching/}{\underline{JEDI-Crowd-Teaching}}.} for real human learners. This interface includes all three modules of teaching: memory length estimation, interactive teaching, and performance evaluation. 
\end{itemize}

The rest of this paper is organized as follows. Section 2 briefly reviews some related work. In Section 3, we formally introduce the model of the learner and the model of the teacher, followed by the discussions of the algorithm and the analysis of the teaching performance. The adaptive teaching using harmonic function is proposed in Section 4. The experimental results are presented in Section 5 and we conclude this paper in Section 6.

\section{Related Work}
In this section, we start by reviewing the research in machine teaching followed with its recent advances. Next, we will review the crowdsourcing works that have some overlap with machine teaching. In the end, we also introduce several other works closely related to human learning and teaching.

\subsection{Machine Teaching}
The inverse problem of machine learning is named as machine teaching, which typically assumes that there is a teacher who knows both the target concept and the learning algorithm used by a learner. Then, the teacher wants to teach the target concept to the learner by constructing an optimal teaching set of examples. One classic definition of this "optimal" is teaching dimension \cite{DBLP:journals/jcss/GoldmanK95} which is referred to as the cardinality of the teaching set. Finding the optimal teaching set which strictly minimizes the teaching dimension is a difficult problem to solve in general. Thus, a relaxed formulation \cite{MT_overview} of machine teaching has been proposed as an optimization problem that minimizes both the teaching risk and the teaching cost. In recent years, there has been a wide range of applications related to machine teaching, e.g., crowdsourcing \cite{DBLP:conf/icml/SinglaBBKK14,DBLP:conf/cvpr/JohnsAB15}, educational tutoring, and data poisoning \cite{DBLP:conf/aaai/MeiZ15, DBLP:conf/icml/XiaoBBFER15}, etc. In the meantime, several theoretical works have studied various aspects of machine teaching such as iterative machine teaching \cite{DBLP:conf/icml/LiuDHTYSRS17}, recursive teaching dimension \cite{DBLP:journals/jmlr/DoliwaFSZ14}, and teaching dimension of linear learners \cite{DBLP:journals/jmlr/LiuZ16}, etc. Our work extends the study of machine teaching into the domain of crowdsourcing, and we studied the crowd teaching problem both theoretically and empirically. 

\subsection{Crowdsourcing}
Crowdsourcing is a special sourcing model in which pieces of micro-tasks are distributed to a pool of online workers. It has become a popular research topic in the recent decades because of its widely commercial and academic adoptions in related areas. One of the fundamental
problems of interest is how to properly guide the online workers and teach them the correct labeling concept given the fact that those hired workers are usually non-experts. Based on the learning and teaching styles \cite{Felder88learningand} that students progress towards concept understanding, human learners can be categorized as either the sequential learners (who learn things in continual steps) or global learners (who learn things in large jumps, holistically). Inspired by this pioneer work, recently, several teaching models have been proposed: the gradient descent model proposed in \cite{DBLP:conf/icml/LiuDHTYSRS17} studied the teaching paradigm for sequential learners and their study conducted on human toddlers has demonstrated the effectiveness of iterative machine teaching; the non-stationary hypothesis transition model proposed in \cite{DBLP:conf/icml/SinglaBBKK14} assumes crowdsourcing workers are global learners and their learned concepts are randomly switched based on observed workers' feedback; the expected error reduction model proposed in \cite{DBLP:conf/cvpr/JohnsAB15} learns to present the most informative teaching images to the students by using an online estimation of their current knowledge. Compared with the former approaches, our work explicitly models the human learner with an exponentially decayed memory which is suitable for the human short-term memory concept learning \cite{HENSON199873}. Meanwhile, our teaching paradigm is an adaptive crowd teaching framework that ensures both the usefulness and the diversity of the teaching examples.

\subsection{Other Related Work}
Besides the existing works on machine teaching and crowdsourcing, the proposed work in this paper is also closely related to many other research subjects such as active learning \cite{DBLP:conf/icml/YanRFD11} and curriculum learning \cite{DBLP:conf/icml/BengioLCW09}. The learner in active learning can query the label of an example from the oracle; however, the teaching example in machine teaching is recommended by the teacher. Curriculum learning, which is inspired by the learning process of humans and animals, suggests an easy-to-complex teaching strategy. The empirical results conducted on human subjects in \cite{DBLP:conf/nips/KhanZM11} have indicated that human teachers tend to follow the curriculum learning principle. In curriculum learning, samples in the teaching sequence are selected merely based on the example difficulty. However, as a comparison, self-paced learning with diversity \cite{DBLP:conf/nips/JiangMYLSH14} which also favors example diversity has shown its superior performance on various learning tasks such as detection and classification. 

\section{The crowd teaching framework}
In this paper, we denote $\mathcal{X} \subset \mathbb{R}^m $ as the $m$-dimensional feature representations of all examples (e.g., images or documents) and $\mathcal{Y}$ as the collection of labels. The teacher has access to a labeled subset $\Phi \subset \mathcal{X} \times \mathcal{Y}$, which is named as the teaching set\footnote{The definition of teaching set in this paper is the same as in \cite{DBLP:conf/icml/SinglaBBKK14,DBLP:conf/cvpr/JohnsAB15}. However, in the concept of teaching dimension \cite{DBLP:journals/jmlr/LiuZ16}, the definition of teaching set is different from ours.} of the teaching task. For binary concept learning, $\textbf{x} \in \mathcal{X}$ is the feature representation of one example, and $y \in \{-1, +1\}$ is its corresponding binary class label. We assume the teacher knows the target concept $\textbf{w}_* \in \mathbb{R}^m$ and the learning model (e.g., logistic regression) of each learner. The teacher wants to teach the target concept to the learner using a personalized teaching set which is constructed by interacting with the learner for multiple teaching iterations. To be specific, each teaching iteration (e.g., the $t$-th iteration) includes the following three major steps:
\begin{itemize}
    \item First, the teacher estimates the current concept $\textbf{w}_{t-1}$ grasped by the learner and recommends a new teaching example ($\textbf{x}_t, y_t$) to the learner.
    \item Next, the teacher will show the recommended teaching example (without revealing its true label $y_t$), and ask the learner to provide its label estimation $\tilde{y}_t$.
    \item At last, the teacher reveals the true label $y_t$ to the learner, and the learner will perform the learning use ($\textbf{x}_t, y_t$).
\end{itemize}
\subsection{Model of the Learner}
To begin with, we assume that the learners to be taught are active learners who are seeking for improvement and aim to become the experts of the given task. Therefore, we do not take the spammers or adversaries into consideration under this teaching setting. 

Now, we formally introduce the model of the learner, whose assets include its \textit{initial concept $\textbf{w}_0$}, \textit{learning loss $\mathcal{L}(\cdot,\cdot)$}, \textit{learning procedure}, and \textit{learning rate $\eta_t$}. After the $t$-th teaching iteration, the learner applies a linear model, i.e., $\textbf{w}_t^T \textbf{x}$, to predict using its learned concept $\textbf{w}_t$. Similar to the learning model proposed in \cite{DBLP:conf/icml/LiuDHTYSRS17}, we also assume that the learner uses a gradient descent learning procedure. However, based on the fact that the real human learner's categorization decisions are guided by a small set of examples retrieved from memory at the time of decision \cite{DBLP:conf/nips/PatilZKL14, memoryretrieval}, and the retrievability of memory is usually approximated with an exponential curve \cite{forgettingcurve}, we further assume that each learner has an exponentially decayed retrievability for the learned concept in terms of the order of the teaching examples, i.e.,:
\begin{equation}
    \begin{split}
        \textbf{v}_1 &= \beta \textbf{v}_0 + \frac{\partial \mathcal{L} (\textbf{w}_0^T \textbf{x}_1, y_1)}{\partial \textbf{w}_0} \\
        \textbf{v}_2 &= \beta \textbf{v}_1 + \frac{\partial \mathcal{L} (\textbf{w}_1^T \textbf{x}_2, y_2)}{\partial \textbf{w}_1} \\
            &= \beta^2 \textbf{v}_0 + \Big[ \beta \frac{\partial \mathcal{L} (\textbf{w}_0^T \textbf{x}_1, y_1)}{\partial \textbf{w}_0} + \frac{\partial \mathcal{L} (\textbf{w}_1^T \textbf{x}_2, y_2)}{\partial \textbf{w}_1} \Big] \\
        &\hdots \\
        \textbf{v}_t &= \beta^t \textbf{v}_0 + \sum^t_{s=1}{\beta^{t-s} \frac{\partial \mathcal{L}(\textbf{w}_{s-1}^T\textbf{x}_{s}, y_s)}{\partial \textbf{w}_{s-1}}}
    \end{split}
\end{equation}
where $\beta \in (0,1)$ is the personalized memory decay rate. Various learners can have different memory lengths, and this personalized memory length is parameterized by $\beta$. The learners with large $\beta$ can actually retrieve more information from their memory. The concept momentum $\textbf{v}_t$ is defined as the linear combination of its previous concept momentum $\textbf{v}_{t-1}$ and the gradient of the learner's loss $\frac{\partial \mathcal{L}(\textbf{w}_{t-1}^T \textbf{x}_t, y_t)}{\partial \textbf{w}_{t-1}}$. The initial momentum $\textbf{v}_0$ is usually set to \textbf{0} in practice. With the properly chosen learning rate $\eta_t$, the learner uses the gradient descent learning procedure to improve their concept in an iterative way:
\begin{equation} \label{LearnerUpdate}
    \textbf{w}_{t} \leftarrow \textbf{w}_{t-1} - \eta_t \textbf{v}_t 
\end{equation}
Similar to stochastic gradient descent (SGD) with momentum, the learner will update his/her concept $\textbf{w}_t$ towards the target concept $\textbf{w}_*$ along the direction of the negative concept momentum  $-\textbf{v}_t$, which is the linear combination of the negative gradients of the learning losses with exponentially decayed weights. Intuitively, the concept learned by a human learner depends on a sequence of teaching examples. The latest example will contribute more (has larger weights) towards learning than the earlier ones. 
\begin{figure}[!t]
  \centering
    \includegraphics[scale = 0.33]{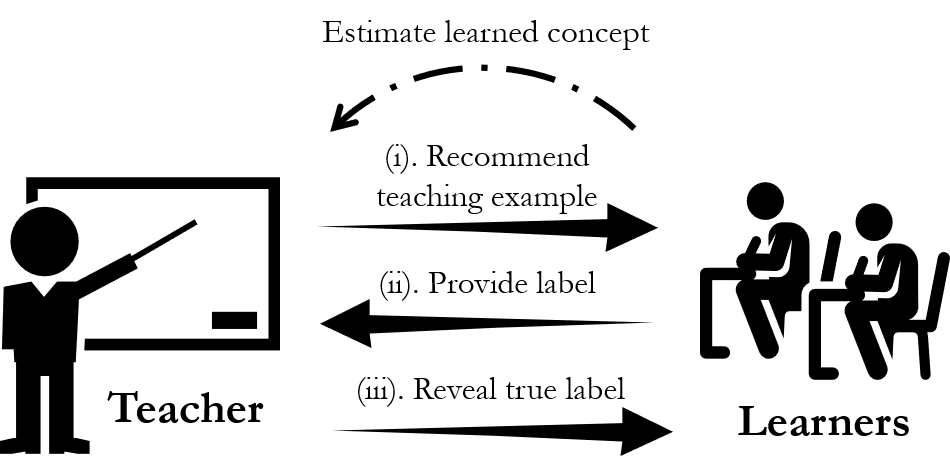}
  \caption{Illustration of JEDI (one teaching iteration)} 
  \label{N/A}
\end{figure}
\subsection{Model of the Teacher}
Initially, we assume that the teacher has access to the learner's current concept $\textbf{w}_t$, learning loss, learning procedure, etc., and the teacher intends to guide the learner towards the target concept $\textbf{w}_*$. Notice that in real-world teaching, the teacher generally does not have direct access to a learner's current concept. The alternative of estimating a learner's concept will be introduced in Section 4. Thus, the objective of teaching is proposed as follows:
\begin{equation}
    \begin{split}
        \min || \textbf{w}_t - \textbf{w}_* ||_2^2  
    \end{split}
\end{equation}

This objective is designed to minimize the discrepancy between the target concept $\textbf{w}_*$ and the learner's current concept $\textbf{w}_t$ after $t$ rounds of teaching. The objective can be decomposed into three parts by substituting Eq. (\ref{LearnerUpdate}) into it:
\begin{equation}
    \begin{split}
        \mathcal{O}(\textbf{x}_t,y_t)  =&\norm{ \textbf{w}_t - \textbf{w}_* }_2^2  \\
        = & \norm{ \textbf{w}_{t-1} - \textbf{w}_* }_2^2 + \eta_t^2 \underbrace{\norm{ \sum_{s=1}^t  \beta^{t-s} \frac{\partial \mathcal{L}(\textbf{w}_{s-1}^T \textbf{x}_s, y_s)}{\partial \textbf{w}_{s-1}} }_2^2}_\text{$T_1$: Diversity of the teaching sequence} \\
         - & 2\eta_t \underbrace{\bigg< \textbf{w}_{t-1} - \textbf{w}_*, \sum^{t}_{s=1} \beta^{t-s} \frac{\partial \mathcal{L}(\textbf{w}_{s-1}^T \textbf{x}_s, y_s)}{\partial \textbf{w}_{s-1}} \bigg>}_\text{$T_2$: Usefulness of the teaching sequence}
    \end{split}
\end{equation}
The first part is the discrepancy between $\textbf{w}_*$ and learner's previous concept $\textbf{w}_{t-1}$, and the second part $T_1$ essentially measures the diversity of the teaching sequence. The third part $T_2$ measures the usefulness of the teaching sequence and the intuitive explanations of $T_1$ and $T_2$ will be clear later. 

Meanwhile, we assume that the teacher has an infinite memory of the teaching sequence of examples $\mathcal{D}_t = \{ (\textbf{x}_1, y_1), \,\hdots, (\textbf{x}_t, y_t)\}$, as well as the corresponding estimate of the concept sequence from the learner $\mathcal{W}_t = \{ \textbf{w}_0, \textbf{w}_1, \,\hdots, \textbf{w}_t\}$. 

\vspace{3mm}
\noindent \textbf{\large Diversity of the teaching sequence}.
\vspace{2mm} \\
In order to simplify $T_1$, we further decompose it into two intermediate terms:
\begin{equation}
    \begin{split}
        T_1 &= \sum^t_{s=1} \beta^{2(t-s)} \norm{\frac{\partial \mathcal{L}(\textbf{w}_{s-1}^T \textbf{x}_s, y_s)}{\partial \textbf{w}_{s-1}}}^2_2\\
        &+ \sum^t_{s=1}\sum^t_{r\neq s} \beta^{2t-s-r} \bigg< \frac{\partial \mathcal{L}(\textbf{w}_{s-1}^T \textbf{x}_s, y_s)}{\partial \textbf{w}_{s-1}}, \frac{\partial \mathcal{L}(\textbf{w}_{r-1}^T \textbf{x}_r, y_r)}{\partial \textbf{w}_{r-1}} \bigg>
    \end{split}
\end{equation}

The selection of the learning loss can be flexible. For the task of teaching a classification concept, we utilize the logistic loss, which is convex and smooth, to illustrate the key idea, i.e., $\mathrm{log}\big(1+\mathrm{exp}(-y \textbf{w}^T \textbf{x}) \big)$, although the proposed framework can be extended to other loss functions. Easily, we can have the gradient norm of each teaching example as $\Big( \frac{-y}{1+\mathrm{exp}( y\textbf{w}^T\textbf{x} )} \Big)^2 \norm{\textbf{x}}^2_2$, which has the interpretation of \textit{example difficulty} when all the example feature $\textbf{x}$ lies on a hypersphere (e.g., L2-normalized bag-of-words features in document classification). In that case, $\norm{\textbf{x}}_2 = 1$ and the first term of $T_1$ becomes the sum of squares of the probability of incorrect predictions. 

\begin{figure}[t]
    \centering
    \fbox{\includegraphics[scale = 0.3]{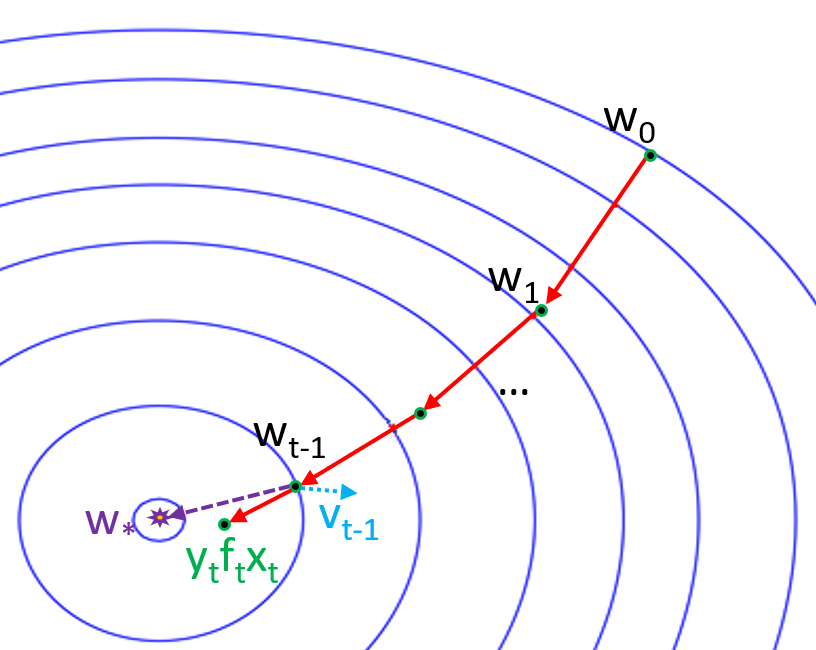}}
    \caption{Trade-off between diversity and usefulness}
    \label{tradeoff}
\end{figure}

Our goal of teaching is to recommend the next teaching example $(\textbf{x}_t, y_t)$, therefore, these observed gradients (with indices $s=1, \hdots, t-1$) are not relevant in this teaching optimization sub-problem of minimizing $T_1$. If we substitute the gradient of logistic loss into the objective, it is straightforward to get the following equivalent optimization sub-problem:
\begin{equation}
    \begin{split}
        &\min_{(\textbf{x}_t, y_t)} \; T_1 \\
        \Leftrightarrow  &\min_{(\textbf{x}_t, y_t) } \bigg( \frac{y_t}{1+\mathrm{exp}(y_t\textbf{w}_{t-1}^T \textbf{x}_t)} \bigg)^2 \norm{\textbf{x}_t}_2^2\\
        &+ \sum^{t-1}_{s=1} 2\beta^{t-s} \bigg[ \frac{y_s y_t}{\Big(1+\mathrm{exp}(y_s\textbf{w}_{s-1}^T \textbf{x}_s)\Big) \Big(1+\mathrm{exp}(y_t\textbf{w}_{t-1}^T \textbf{x}_t)\Big)} \bigg] \textbf{x}^T_s \textbf{x}_t
    \end{split}
\end{equation}

\vspace{3mm}
\noindent \textbf{\large Usefulness of the teaching sequence}.
\vspace{2mm} \\
$T_2$ part in the objective serves as the measurement of the usefulness of the whole teaching sequence. Specifically, it is the weighted sum of all inner products between $\textbf{w}_{t-1} - \textbf{w}_*$ and the gradients of the teaching sequence examples. It means that the entire teaching sequence $\mathcal{D}_t$ will contribute to maximizing the convergence of the teaching. The larger value the inner product has, the more useful this teaching sequence is. However, similar to the $T_1$ minimization sub-problem, only $(\textbf{x}_t, y_t)$ relevant terms matter for the purpose of maximizing $T_2$:
\begin{equation}
    \begin{split}
        &\max_{(\textbf{x}_t, y_t) } \; T_2\\ 
        \Leftrightarrow &\max_{(\textbf{x}_t, y_t) }  \sum^{t-1}_{s=1} \beta^{t-s} \bigg< \textbf{w}_{t-1} - \textbf{w}_*, \frac{\partial \mathcal{L}(\textbf{w}_{s-1}^T \textbf{x}_s, y_s)}{\partial \textbf{w}_{s-1}} \bigg> \\
        &+ \bigg< \textbf{w}_{t-1} - \textbf{w}_*, \frac{\partial \mathcal{L}(\textbf{w}_{t-1}^T \textbf{x}_t, y_t)}{\partial \textbf{w}_{t-1}} \bigg> \\
        \Leftrightarrow &\max_{(\textbf{x}_t, y_t) } \bigg< \textbf{w}_{t-1} - \textbf{w}_*, \frac{-y_t \textbf{x}_t}{1+\mathrm{exp}(y_t\textbf{w}_{t-1}^T \textbf{x}_t)} \bigg>
    \end{split}
\end{equation}

\vspace{3mm}
\noindent \textbf{\large Trade-off between diversity and usefulness}.
\vspace{2mm} \\
For simplicity, we denote $f_t := \frac{1}{1+\mathrm{exp}(y_t \textbf{w}_{t-1}^T \textbf{x}_t)}$ and $f_s := \frac{1}{1+\mathrm{exp}(y_s \textbf{w}_{s-1}^T \textbf{x}_s)}$, where $s = 1,\hdots, t-1$. Then, the overall teaching problem becomes:

\begin{equation} \label{overallOBJ}
    \begin{split}
        &\min_{(\textbf{x}_t, y_t)} \mathcal{O}(\textbf{x}_t,y_t) \\
        \Leftrightarrow &\min_{(\textbf{x}_t, y_t)} \eta_t^2 \bigg[ \big< y_t f_t \textbf{x}_t, y_t f_t \textbf{x}_t \big> + 2\sum^{t-1}_{s=1} \beta^{t-s} \big< y_s f_s \textbf{x}_s, y_t f_t \textbf{x}_t \big>  \bigg] \\
        &-2\eta_t \big< \textbf{w}_{t-1}-\textbf{w}_*, -y_t f_t \textbf{x}_t \big> \\
        \Leftrightarrow &\min_{(\textbf{x}_t, y_t) } \eta_t^2 \norm{y_t f_t \textbf{x}_t - \textbf{v}_{t-1}}^2_2 - 2\eta_t \big< \textbf{w}_* -\textbf{w}_{t-1}, y_t f_t \textbf{x}_t \big>
    \end{split}
\end{equation}


Prob. (\ref{overallOBJ}) aims to maximize the teaching diversity ($T_1$ part) and the teaching usefulness ($T_2$ part) at the same time. As illustrated in Figure \ref{tradeoff}, the teacher prefers the negative gradient $y_t f_t \textbf{x}_t$ of next teaching example is similar to the concept momentum $\textbf{v}_{t-1}$ and has large correlation with the target learning direction $\textbf{w}_* - \textbf{w}_{t-1}$. The learning rate $\eta_t $ is usually set to a small value ($\eta_t < 1$) in optimization. Therefore, the teaching usefulness ($T_2$ part) dominates the teaching process. It is straightforward to see that when the $\beta = 0$, our objective is directly reduced to the no memory teaching framework proposed in \cite{DBLP:conf/icml/LiuDHTYSRS17}.

In order to solve this teaching problem, we denote $\textbf{a}_t := y_t f_t \textbf{x}_t$. Then, the teaching objective can be further simplified as:
    \begin{equation}
        \begin{split}
            \mathcal{O}(\textbf{x}_t, y_t) &= \eta_t^2 \bigg[ \norm{\textbf{a}_t}_2^2 - 2\bigg< \textbf{a}_t, \textbf{v}_{t-1} + \frac{\textbf{w}_* - \textbf{w}_{t-1}}{\eta_t} \bigg> \bigg]\\
            &= \eta_t^2 \norm{\textbf{a}_t - \bigg( \textbf{v}_{t-1} + \frac{\textbf{w}_* - \textbf{w}_{t-1}}{\eta_t} \bigg)}_2^2 \\
            &- \eta_t^2 \norm{\textbf{v}_{t-1} + \frac{\textbf{w}_* - \textbf{w}_{t-1}}{\eta_t} }_2^2\\
        \end{split}
    \end{equation}
The concept momentum $\textbf{v}_{t-1}$ is the weighted sum of gradients of the teaching sequence $\mathcal{D}_{t-1}$ using exponentially decayed weights. 

Based on this new objective, we propose the teaching algorithm \textbf{JEDI} (Ad\textbf{J}ustable \textbf{E}xponential \textbf{D}ecay Memory \textbf{I}nteractive Crowd Teaching). The JEDI teaching algorithm with omniscient teacher (having access to learner's concept sequence $\mathcal{W}_t$) is shown in \textbf{Algorithm} \ref{firstAlg}. It is given the learner's memory decay rate, initial concept, target concept, learning rate, teaching set as input, and will output the personalized teaching sequence. JEDI works as follows. We first initialize the teaching iterator $t = 1$ and initial momentum $\textbf{v}_0 = \textbf{0}$. Then, in each iteration, the teacher searches through the teaching set $\Phi$ and finds the example ($\textbf{x}_t, y_t$) that minimizes the objective function in Eqn. (\ref{JEDI_Omni_alg}), where $f=\frac{1}{1+\mathrm{exp}(y \textbf{w}_{t-1}^T \textbf{x})}$ is the probability of incorrect prediction of example ($\textbf{x},y$) in the teaching set $\Phi$. Next, the learner performs the labeling on $\textbf{x}_t$ using its current concept $\textbf{w}_{t-1}$. Then, the label $y_t$ is revealed by the teacher and the learner performs learning using Eqn. (\ref{LearnerUpdate}). This interactive teaching will continue until the stopping criteria is satisfied.

In exponential weighted average, the number of examples being used is usually approximated \cite{DBLP:journals/corr/KingmaB14} as $\frac{1}{1-\beta}$. Therefore, we can assume that there exists a memory window size (i.e,. how many examples or corresponding concept gradients the learner can memorize) for each learner, and it can be approximated as $\frac{1}{1-\beta}$. In the following, we use the two-example teaching scenario (e.g., memory decay rate is as low as $\beta \approx 0.5$, or it is the second iteration of teaching $t = 2$) as a running example, where the learner can memorize a teaching sequence of size 2. Notice that the analysis and conclusions can be extended to other values of $\beta$ as well. In the two-example teaching scenario, the trade-off between diversity and usefulness will lead to further insights with the help of the following definitions and theorem.
\begin{algorithm}[t]
\caption{JEDI with omniscient teacher}
\label{firstAlg}
\begin{algorithmic}[1]
\State \textbf{Input:} Learner's memory decay rate $\beta$, initial concept $\textbf{w}_0$, target concept $\textbf{w}_*$, initial learning rate $\eta_0$, teaching set $\Phi$, MaxIter.
\State \textbf{Initialization:} 
\begin{equation*}
    \begin{split}
        \textbf{v}_0 &\leftarrow \textbf{0} \\
        t &\leftarrow 1 
    \end{split}
\end{equation*}
\State  \textbf{Repeat:}
\State  \quad (i). Among all examples ($\textbf{x},y$) in teaching set $\Phi$ and their probabilities of incorrect prediction $f$, the teacher recommends example ($\textbf{x}_t, y_t$) to the learner by solving:
\begin{equation} \label{JEDI_Omni_alg}
    \hspace{5mm}
    (\textbf{x}_t, y_t) = \argmin_{(\textbf{x}, y) \in \Phi } \norm{y f \textbf{x} - \bigg( \textbf{v}_{t-1} + \frac{\textbf{w}_* - \textbf{w}_{t-1}}{\eta_t} \bigg)}_2^2
\end{equation}
\State \quad (ii). Learner performs the labeling.
\State \quad (iii). Learner performs learning after teacher reveals $y_t$.
\State \quad (iv). t $\leftarrow$ t + 1
\State  \textbf{Until} converged \textbf{or} $t >$ MaxIter 
\State \textbf{Output:} The teaching sequence $\mathcal{D}_t$
\end{algorithmic}
\end{algorithm}

\begin{definition}
    Given the previous teaching example ($\textbf{x}_{t-1}, y_{t-1}$), if the teacher recommends a new teaching example ($\textbf{x}_{t}, y_{t}$) which has different label $y_{t} \neq y_{t-1}$, this teaching action is named \textbf{Exploration}. If the new teaching example has the same label $y_{t} = y_{t-1}$, this teaching action is named \textbf{Exploitation}.
\end{definition}
\begin{definition}
    Given the previous teaching example ($\textbf{x}_{t-1}, y_{t-1}$), its negative gradient $y_{t-1} f_{t-1} \textbf{x}_{t-1}$ and its optimal teaching direction $\textbf{w}_* - \textbf{w}_{t-1}$ has an angle $\theta \in [0, \pi]$. Then, this teaching example is \textbf{not useful} towards teaching optimal $\textbf{w}_*$ if the angle satisfies $\theta \geq \frac{\pi}{2}$.
\end{definition}

\begin{theorem} \label{EE}
    (\textbf{Exploration vs. Exploitation}) For two-example teaching, if the previous teaching example ($\textbf{x}_{t-1}, y_{t-1}$) is not useful towards teaching optimal, the teacher will recommend large diversity teaching example ($\textbf{x}_{t}, y_{t}$) for exploitation, i.e., $y_{t}= y_{t-1}$, or recommend highly similar teaching example ($\textbf{x}_{t}, y_{t}$) for exploration, i.e., $y_{t}\neq y_{t-1}$.
\end{theorem}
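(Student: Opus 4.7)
The plan is to reduce the two-example JEDI objective to an explicit sign condition on $\langle \textbf{x}_2, \textbf{x}_1 \rangle$ and then read off the dichotomy from that sign in each of the two label cases.

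First, I specialize to $t = 2$ with the initialization $\textbf{v}_0 = \textbf{0}$. Under the logistic loss, the concept momentum after the first round becomes $\textbf{v}_1 = -y_1 f_1 \textbf{x}_1$, so the squared-distance reformulation in Eq.~(\ref{JEDI_Omni_alg}) tells me that the teacher picks $(\textbf{x}_2, y_2) \in \Phi$ so as to match the negative-gradient $\textbf{a}_2 = y_2 f_2 \textbf{x}_2$ to the target direction
\[
\textbf{d} \;:=\; \textbf{v}_1 + \frac{\textbf{w}_* - \textbf{w}_1}{\eta_2} \;=\; -y_1 f_1 \textbf{x}_1 + \frac{\textbf{w}_* - \textbf{w}_1}{\eta_2}.
\]
Next, I rewrite the not-useful hypothesis: the angle between $y_1 f_1 \textbf{x}_1$ and $\textbf{w}_* - \textbf{w}_1$ is at least $\pi/2$, i.e.\ $\langle y_1 f_1 \textbf{x}_1, \textbf{w}_* - \textbf{w}_1 \rangle \leq 0$; since $f_1 > 0$, this rearranges to $y_1 \langle \textbf{x}_1, \textbf{w}_* - \textbf{w}_1 \rangle \leq 0$.

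I then split on the label. In the exploitation case $y_2 = y_1$, the direction that minimizes $\|\textbf{a}_2 - \textbf{d}\|_2^2$ aligns $\textbf{x}_2$ with $y_1 \textbf{d}$, so
\[
\langle \textbf{x}_2, \textbf{x}_1 \rangle \;\propto\; -f_1 \|\textbf{x}_1\|_2^2 + \frac{y_1 \langle \textbf{w}_* - \textbf{w}_1, \textbf{x}_1 \rangle}{\eta_2}.
\]
The first summand is strictly negative, and by the rearranged not-useful inequality the second is non-positive, so $\langle \textbf{x}_2, \textbf{x}_1 \rangle < 0$: $\textbf{x}_2$ forms an obtuse angle with $\textbf{x}_1$, which is the stated large diversity. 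In the exploration case $y_2 = -y_1$, $\textbf{x}_2$ aligns with $-y_1 \textbf{d}$ instead, which flips the sign of both summands and yields $\langle \textbf{x}_2, \textbf{x}_1 \rangle > 0$, i.e.\ an acute angle, the stated high similarity.

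The main obstacle I anticipate is that the JEDI teacher actually searches a finite pool $\Phi$, so the magnitude factors $f_2$ and $\|\textbf{x}_2\|_2$ could interact with the purely directional comparison above. I would handle this by adopting the L2-normalized feature setting the paper invokes when interpreting the $T_1$ gradient norm as example difficulty: under $\|\textbf{x}\|_2 = 1$ each candidate differs only in direction, $f_2$ acts as a positive rescaling, and selection within a fixed label class becomes monotone in $y_2 \langle \textbf{x}_2, \textbf{d} \rangle$. The sign calculation above then applies to whichever element of $\Phi$ the teacher actually picks in each label class, proving the exploration-vs-exploitation dichotomy.
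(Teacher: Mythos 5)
Your proof is correct and follows essentially the same route as the paper: both identify the unconstrained minimizer $y_2 f_2 \textbf{x}_2 = \textbf{v}_1 + (\textbf{w}_* - \textbf{w}_1)/\eta_2$ and use the not-useful hypothesis to pin down the sign of the $\textbf{x}_1$-component of that target, your inner product $\langle \textbf{x}_2, \textbf{x}_1\rangle$ being exactly the parallel component the paper extracts through its decomposition $(\textbf{w}_* - \textbf{w}_{t-1})_{\parallel} = \alpha \textbf{a}_{t-1}$ with $\alpha \leq 0$ (the extra factor of $\beta$ in the paper's $\textbf{a}_{t-1}$ is positive and immaterial to the sign). Your closing paragraph about the finite pool overclaims slightly --- the sign of $\langle \textbf{x}_2, \textbf{x}_1\rangle$ need not transfer to the nearest achievable element of $\Phi$ even under L2 normalization, because $f_2$ still varies across candidates --- but the paper's own proof likewise analyzes only the ideal synthetic minimizer, so your argument is not weaker than the one it replaces.
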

\begin{proof}
    Let $\textbf{a}_{t-1} := \beta y_{t-1} f_{t-1} \textbf{x}_{t-1}$, then the teaching objective becomes:
    \begin{equation*}
        \begin{split}
            \mathcal{O}(\textbf{x}_t, y_t) &= \eta_t^2 \norm{\textbf{a}_t + \bigg( \textbf{a}_{t-1} - \frac{\textbf{w}_* - \textbf{w}_{t-1}}{\eta_t} \bigg)}_2^2 \\
            &- \eta_t^2 \norm{\textbf{a}_{t-1} - \frac{\textbf{w}_* - \textbf{w}_{t-1}}{\eta_t} }_2^2\\
        \end{split}
    \end{equation*}
    
    The minimum of the objective is guaranteed to be reached when the next teaching example $\textbf{x}_t$ is selected as follows:
    \begin{equation*}
        \textbf{x}_t = -\beta \frac{y_{t-1}}{y_t} \frac{f_{t-1}}{f_t} \textbf{x}_{t-1} + \frac{1}{y_t f_t} \frac{\textbf{w}_* - \textbf{w}_{t-1}}{\eta_t} 
    \end{equation*}
    
    Using $\textbf{a}_{t-1}$ as the reference, the optimal teaching direction vector can be decomposed as $\textbf{w}_* - \textbf{w}_{t-1} = (\textbf{w}_* - \textbf{w}_{t-1})_{\parallel} + (\textbf{w}_* - \textbf{w}_{t-1})_{\bot}$ in $\textbf{a}_{t-1}$'s parallel direction and perpendicular direction. If the previous teaching example is not useful (i.e., $\theta \geq \frac{\pi}{2}$), without loss of generality, we can assume $(\textbf{w}_* - \textbf{w}_{t-1})_{\parallel} = \alpha \textbf{a}_{t-1}$, where $\alpha \leq 0$ is obviously satisfied. Then, we have:
    \begin{align*}
        \textbf{x}_t = -\gamma_+ \frac{f_{t-1}}{f_t} \textbf{x}_{t-1} &+ \xi_t && (\Leftarrow \mathrm{Exploitation} )\\
        \textbf{x}_t = \gamma_+ \frac{f_{t-1}}{f_t} \textbf{x}_{t-1}  &+ \xi_t &&  (\Leftarrow \mathrm{Exploration})
    \end{align*}
    where $\gamma_+ = (1-\frac{\alpha}{\eta_t})\beta$ is a positive scalar and $\xi_t = \frac{1}{y_t f_t} \frac{(\textbf{w}_* - \textbf{w}_{t-1})_{\bot}}{\eta_t}$ is the teaching perturbation. If the previous teaching example $\textbf{x}_{t-1}$ is not useful, then the teacher will prefer the next teaching example $\textbf{x}_t$ to be very different from the previous one for exploitation (intra-class teaching) or to be similar with the previous one for exploration (inter-class teaching).
\end{proof}

The teaching action choice between exploration and exploitation is very clear especially when the previous teaching example is most useless (i.e., $\theta = \pi$), under which scenario the recommended teaching example has zero teaching perturbation $(\textbf{w}_* - \textbf{w}_{t-1})_{\bot} = 0$. The magnitude of the teaching perturbation is positively correlated with the usefulness of the previous teaching example since $(\textbf{w}_* - \textbf{w}_{t-1})_{\bot} \propto \mathrm{sin}(\theta)$ and $\theta \geq \frac{\pi}{2}$. Therefore, if the previous teaching example is less useful ($\theta$ becomes larger), the perturbation will become smaller, and the teacher has less uncertainty to decide whether the next teaching recommendation should be an exploitation action or an exploration action.
\begin{proposition}
    For the examples that live on a hypersphere, if the previous teaching example ($\textbf{x}_{t-1}, y_{t-1}$) is most useless ($\theta = \pi$) towards teaching optimal and the learning rate satisfies $\eta_t \geq \frac{\alpha \beta}{\beta - 1}$, then the teacher recommended example ($\textbf{x}_t, y_t$) is guaranteed to have better labeling quality than ($\textbf{x}_{t-1}, y_{t-1}$), i.e., the learner can correctly label example $\textbf{x}_t$ with higher probability than labeling example $\textbf{x}_{t-1}$. 
\end{proposition}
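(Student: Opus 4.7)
The plan is to translate the claim about labeling probabilities into a claim about the quantities $f_t$ and $f_{t-1}$, then plug in the explicit form of the optimal $\textbf{x}_t$ produced by Theorem \ref{EE}, and finally read the learning-rate hypothesis off a one-line algebraic inequality.

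First, I would observe that the probability of a learner with concept $\textbf{w}$ correctly labeling $(\textbf{x},y)$ under the logistic model equals $1-\frac{1}{1+\exp(y\textbf{w}^T\textbf{x})}$, so by definition of $f_s$ the values $1-f_{t-1}$ and $1-f_t$ are exactly the learner's correct-labeling probabilities for $\textbf{x}_{t-1}$ and $\textbf{x}_t$ at their respective teaching iterations. It therefore suffices to prove $f_t\le f_{t-1}$.

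Second, I would specialize the closed-form expression
\[
\textbf{x}_t=-\beta\,\frac{y_{t-1}}{y_t}\,\frac{f_{t-1}}{f_t}\,\textbf{x}_{t-1}+\frac{1}{y_t f_t}\,\frac{\textbf{w}_*-\textbf{w}_{t-1}}{\eta_t}
\]
derived inside the proof of Theorem \ref{EE} to the setting $\theta=\pi$. In this regime the perpendicular component $(\textbf{w}_*-\textbf{w}_{t-1})_{\bot}$ vanishes and the parallel component collapses to $\alpha\,\textbf{a}_{t-1}=\alpha\beta y_{t-1}f_{t-1}\textbf{x}_{t-1}$, so after substitution the identity reduces to $\textbf{x}_t=\pm\,\gamma_+\,(f_{t-1}/f_t)\,\textbf{x}_{t-1}$ with $\gamma_+=(1-\alpha/\eta_t)\beta>0$ (minus sign for exploitation, plus for exploration).

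Third, because all teaching examples lie on a hypersphere, $\|\textbf{x}_t\|_2=\|\textbf{x}_{t-1}\|_2$; taking norms of the previous identity therefore collapses it to the scalar equation $f_t=\gamma_+\,f_{t-1}$. Finally, I would verify that the hypothesis $\eta_t\ge \alpha\beta/(\beta-1)$—which, using $\alpha\le 0$ and $\beta<1$, can be rewritten as $-\alpha/\eta_t\le (1-\beta)/\beta$—is precisely the condition $(1-\alpha/\eta_t)\beta\le 1$, that is $\gamma_+\le 1$, from which $f_t\le f_{t-1}$ immediately follows.

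The only point I would double-check is the sign bookkeeping: $\alpha$ is non-positive and $\beta-1$ is negative, so $\alpha\beta/(\beta-1)\ge 0$ and the stated hypothesis is a meaningful lower bound on $\eta_t$ rather than a vacuous one. I do not expect a genuine obstacle here, since the vector-level hard work has already been carried out in Theorem \ref{EE}; the hypersphere assumption is what converts that vector identity into a usable scalar identity between $f_t$ and $f_{t-1}$.
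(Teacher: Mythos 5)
Your proposal is correct and follows essentially the same route as the paper's own proof: both specialize the closed-form optimum from Theorem~\ref{EE} to the case $\theta=\pi$ (so $\textbf{x}_t$ becomes collinear with $\textbf{x}_{t-1}$), use the hypersphere assumption to collapse the vector identity into the scalar relation $f_t=\gamma_+ f_{t-1}$ (the paper does this by taking the inner product with $\textbf{x}_t$, you by taking norms --- equivalent here), and conclude $f_t\le f_{t-1}$. If anything, your write-up is slightly more complete, since you explicitly show that the learning-rate hypothesis $\eta_t\ge \alpha\beta/(\beta-1)$ is exactly the condition $\gamma_+=(1-\alpha/\eta_t)\beta\le 1$, a step the paper's proof asserts ("the coefficient of $f_{t-1}$ is always smaller than 1") without spelling out.
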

\begin{proof}
    We have $f_t = (\frac{\alpha}{\eta_t}-1)\beta \frac{y_{t-1}}{y_t} \frac{<\textbf{x}_{t-1}, \textbf{x}_t>}{<\textbf{x}_t, \textbf{x}_t>} f_{t-1}$ from Theorem \ref{EE}. For hyperspherical feature space, $\big| \frac{<\textbf{x}_{t-1}, \textbf{x}_t>}{<\textbf{x}_t, \textbf{x}_t>} \big| \leq 1$ and no matter if the teaching action is exploration or exploitation, the coefficient of $f_{t-1}$ is always smaller than 1. Therefore, $f_t$ (probability of incorrectly labeling $\textbf{x}_t$) is smaller than $f_{t-1}$ (probability of incorrectly labeling $\textbf{x}_{t-1}$).
\end{proof}

For the teaching scenarios with multiple teaching examples (e.g., $\beta$ is large), the above theoretical analyses are also applicable by treating the previous teaching sequence $\mathcal{D}_{t-1}$ as one pseudo teaching example with its decayed negative gradient as $\textbf{v}_{t-1}$. 

\section{Adaptively teaching the human learners}
In this section, we first discuss the challenges for teaching the real human learners. Then, we present the methodology which can estimate the human learner's current concept using the harmonic function. In the end, we formally present the algorithm JEDI teaching with harmonic function estimation.

\subsection{Teaching in the Real World}
\vspace{2mm}
\noindent \textbf{All examples help teaching}. After the teacher reveals the true label of the recommended teaching example, the human learner can improve the concept learning either by verify the correctness of his/her labels or by gaining information from the mistakes he/she made.

\vspace{2mm}
\noindent \textbf{Repeated teaching examples}. Memories are so volatile that human learners have to be provided with repeated examples to strengthen the learned concept. Due to this reason, the teaching sequence $\mathcal{D}_t$ selected from the teaching set $\Phi$ should have repeated examples especially when these examples are incorrectly labeled or the learner's memory window size is small.

\vspace{2mm}
\noindent \textbf{Pool-based teaching}. Similar to the pool-based active learning, in many real-world teaching tasks, the synthetically generated teaching examples that meet the global optimum of JEDI objective are not valid real-world examples (e.g., images, documents). Thus, a pool-based search is a more realistic alternative. In other words, the JEDI teacher will search for the best teaching examples in the teaching set $\Phi$ instead of the whole feature and label space.

\vspace{2mm}
\noindent \textbf{Teacher has no access to learner's concept}. To address this challenge, notice that by utilizing the first-order convexity of the learning loss, we can have:
\begin{equation}
    \bigg< \textbf{w}_{t-1} - \textbf{w}_*, \frac{\partial \mathcal{L}(\textbf{w}_{t-1}^T \textbf{x}_t, y_t)}{\partial \textbf{w}_{t-1}} \bigg> \geq \mathcal{L}(\textbf{w}_{t-1}^T \textbf{x}_t, y_t) - \mathcal{L}(\textbf{w}_*^T \textbf{x}_t, y_t)
\end{equation}
Then, minimizing $T_2$ can be relaxed to the problem of optimizing its lower bound. This relaxation enables the teacher to query the learner's prediction $\mathrm{sign}(\textbf{w}^T \textbf{x})$ instead of requiring access to his/her concept $\textbf{w}$ directly (which is impossible for real human learners). The effectiveness of this relaxation depends on the tightness of the lower bound. Therefore, the smaller $\norm{\textbf{w}_{t-1} - \textbf{w}_*}$ is, the tighter the bound is. In other words, this relaxed problem is gradually becoming a reliable approximation of the original problem with more and more teaching iterations.

\subsection{Concept Estimation using Harmonic Function} \label{harmonicSection}
In the teaching phase, for every observed teaching example (with indices $s = 1,\hdots, t-1$), the teacher has access to the features $\textbf{x}_s$ and the learner provided label $\tilde{y}_s$. However, the teacher still needs the learner's probability of incorrect prediction $f = \frac{1}{1+\mathrm{exp}(y \textbf{w}_{t-1}^T \textbf{x})}$ on every example ($\textbf{x},y$) in $\Phi$ to start teaching. One naive way of estimating $f$ is by using learner provided labels to train a supervised classification model, and predict the unlabeled ones with this classifier to get $f$. However, due to the limited number of labeled examples, a semi-supervised model \cite{DBLP:conf/icml/ZhuGL03,DBLP:conf/nips/ZhouBLWS03} should be more effective than supervised models. One alternative to estimating $f$ is by using graph-based semi-supervised learning method proposed in~\cite{DBLP:conf/icml/ZhuGL03}. Given the teaching sequence $\mathcal{D}_{t-1}$, for every unlabeled example, we can estimate its probability of labels using semi-supervised Gaussian random fields and harmonic functions:
\begin{equation} \label{harmonic}
    F_u = (D_{uu} - A_{uu})^{-1}A_{ul}F_l
\end{equation}
In the above formulation, $A$ is the affinity matrix of all examples and $D$ is a diagonal matrix (with $D_{ii} = \sum_{j=1} A_{ij}$). Matrix $A$ can be reordered and split into four blocks as:
$ A = \left[
\begin{array}{cc}
A_{ll} & A_{lu} \\
A_{ul} & A_{uu}
\end{array}
\right]$ and similar block split operation is applied on $D$ as well. $F_l \in \{0,1\}^{|\mathcal{D}_{t-1}| \times 2}$ is the label matrix associated with learner provided labels, where each element is set to 1 if the corresponding label has been provided by the learner and 0 otherwise. Following this convention, the affinity matrix can be constructed as follow:
\begin{equation}
    A_{ij} = \mathrm{exp}\bigg( -\sum_{d=1}^m \frac{(\textbf{x}_{id} - \textbf{x}_{jd})^2 }{\sigma_d^2} \bigg)
\end{equation}
It should be noticed that the teaching examples could be repeatedly recommended by the JEDI teacher, and this is different from the crowd teaching model of \cite{DBLP:conf/cvpr/JohnsAB15}, which also uses the harmonic function but only allows each example to be recommended once. Therefore, before applying the harmonic solution, we only keep the unique examples that have the latest labels provided by the learner in the teaching sequence. Meanwhile, in order to guarantee all examples in the teaching set $\Phi$ could be recommended for next round of teaching, affinity matrix $A$ are padded using extra nodes and edges constructed from the existing teaching sequence $\mathcal{D}_{t-1}$. After applying the harmonic solution, the labeling probability estimation of every example $\textbf{x}$ in $\Phi$ corresponds to a row (whose entries are $p$ and $1-p$) of matrix $F_u$:
\begin{equation}
    \begin{split}
        P(y=1|\textbf{x}, \mathcal{D}_{t-1}) &= p = \frac{1}{1+\mathrm{exp}(-\textbf{w}^T \textbf{x})}\\
        P(y=-1|\textbf{x}, \mathcal{D}_{t-1}) &= 1-p = \frac{1}{1+\mathrm{exp}(\textbf{w}^T \textbf{x})}
    \end{split}
\end{equation}
To calculate concept momentum $\textbf{v}_{t-1}$ in $T_1$, which utilizes the probability of incorrect prediction $f_{s}$ of teaching example ($\textbf{x}_s,y_s$) where $s=1,\hdots, t-1$, the estimated labeling probabilities are used together with the teacher revealed ground truth label $y_s$. They are calculated as:
\begin{align} \label{estimateT1}
    f_s := \frac{1}{1+\mathrm{exp}(y_s \textbf{w}_{s-1}^T \textbf{x}_s)} = \big( 1-p_s \big)^{\frac{y_s+1}{2}} p_s^{\frac{1-y_s}{2}}
\end{align}
\noindent where $p_s$ is the harmonic probability estimate of teaching example $\textbf{x}_s$. 
Similarly, in order to calculate $T_2$ term, the estimated labeling probabilities are jointly used with ground truth label $y_t$ as:
\begin{align} \label{estimateT2}
    \hspace{8mm} \frac{1}{f_{-t}} := 1+\mathrm{exp}(-y_t \textbf{w}^T_{t-1}\textbf{x}_t) = \Big( \frac{1}{p_t} \Big)^{\frac{y_t+1}{2}} \Big( \frac{1}{1-p_t} \Big)^{\frac{1-y_t}{2}}
\end{align}
\noindent where $p_t$ is the harmonic probability estimate of teaching example $\textbf{x}_t$. 

\begin{algorithm}[t]
\caption{JEDI with harmonic function estimation}
\label{secondAlg}
\begin{algorithmic}[1]
\State \textbf{Input:} Learner's memory decay rate $\beta$, target concept $\textbf{w}_*$, initial learning rate $\eta_0$, teaching set $\Phi$, affinity matrix A, diagonal matrix D, MaxIter.
\State \textbf{Initialization:} 
\begin{equation*}
    \begin{split}
        \textbf{v}_0 &\leftarrow \textbf{0}\\
        t &\leftarrow 1 
    \end{split}
\end{equation*}

\State  \textbf{Repeat:}
\State  \quad (i). Teacher estimates $F_u$ using Eq. (\ref{harmonic}) and calculates $f_s$ and $\frac{1}{f_{-t}}$ using Eq. (\ref{estimateT1}) and Eq. (\ref{estimateT2}).
\State  \quad (ii). Teacher recommends example ($\textbf{x}_t, y_t$) to the learner:
\begin{equation} \label{JEDIharmonic}
    \begin{split}
        \hspace{5mm}
        (\textbf{x}_t,y_t) = \argmin_{(\textbf{x}, y) \in \Phi } \eta_t^2 \norm{y f \textbf{x} - \textbf{v}_{t-1}}^2_2 - 2\eta_t \mathrm{log} \frac{1+\mathrm{exp}(-y \textbf{w}_{t-1}^T \textbf{x})}{1+\mathrm{exp}(-y \textbf{w}_{*}^T \textbf{x})}
    \end{split}
\end{equation}
\State \quad (iii). Learner performs the labeling and then teacher updates $A, D$, and $F_l$.
\State \quad (iv). Learner performs learning after teacher reveals $y_t$.
\State \quad (v). t $\leftarrow$ t + 1
\State  \textbf{Until} $t >$ MaxIter 
\State \textbf{Output:} The teaching sequence $\mathcal{D}_t$
\end{algorithmic}
\end{algorithm}

\begin{figure*}[!t]
\begin{tabular}{cccc}
\begin{subfigure}{0.25\textwidth} \hspace{-3mm} \includegraphics[width=1\columnwidth]{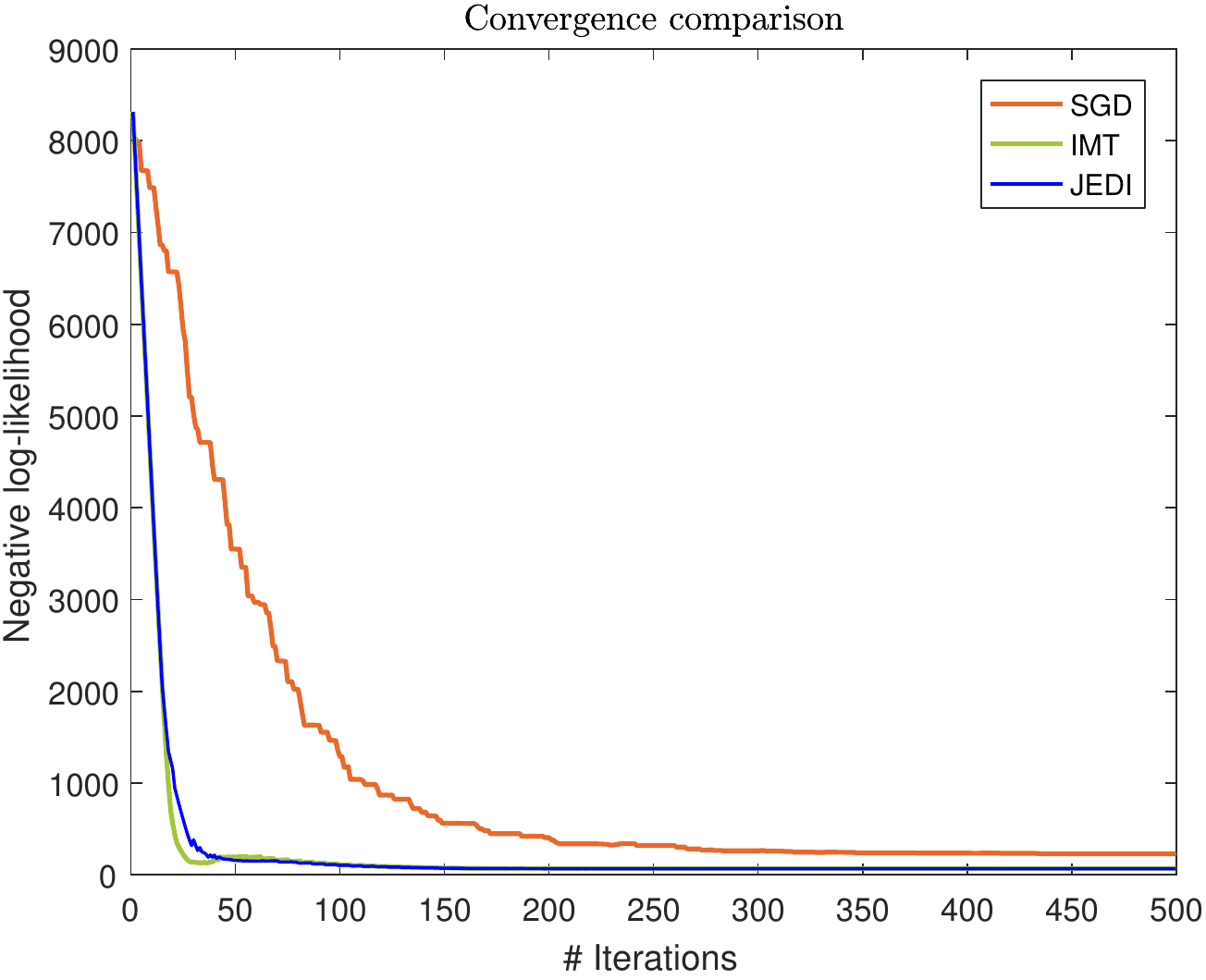}\end{subfigure}&
\begin{subfigure}{0.25\textwidth} \hspace{-6mm} \includegraphics[width=1\columnwidth]{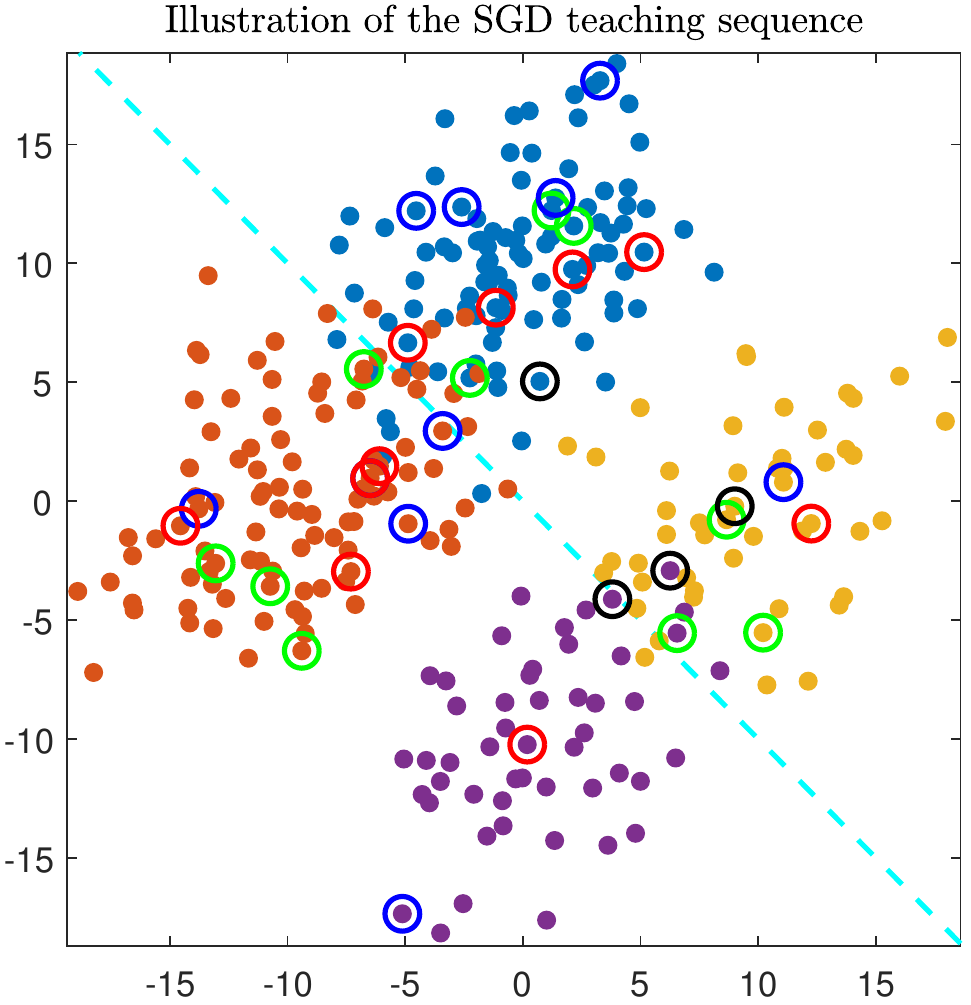}\end{subfigure}&
\begin{subfigure}{0.25\textwidth} \hspace{-9mm} \includegraphics[width=1\columnwidth]{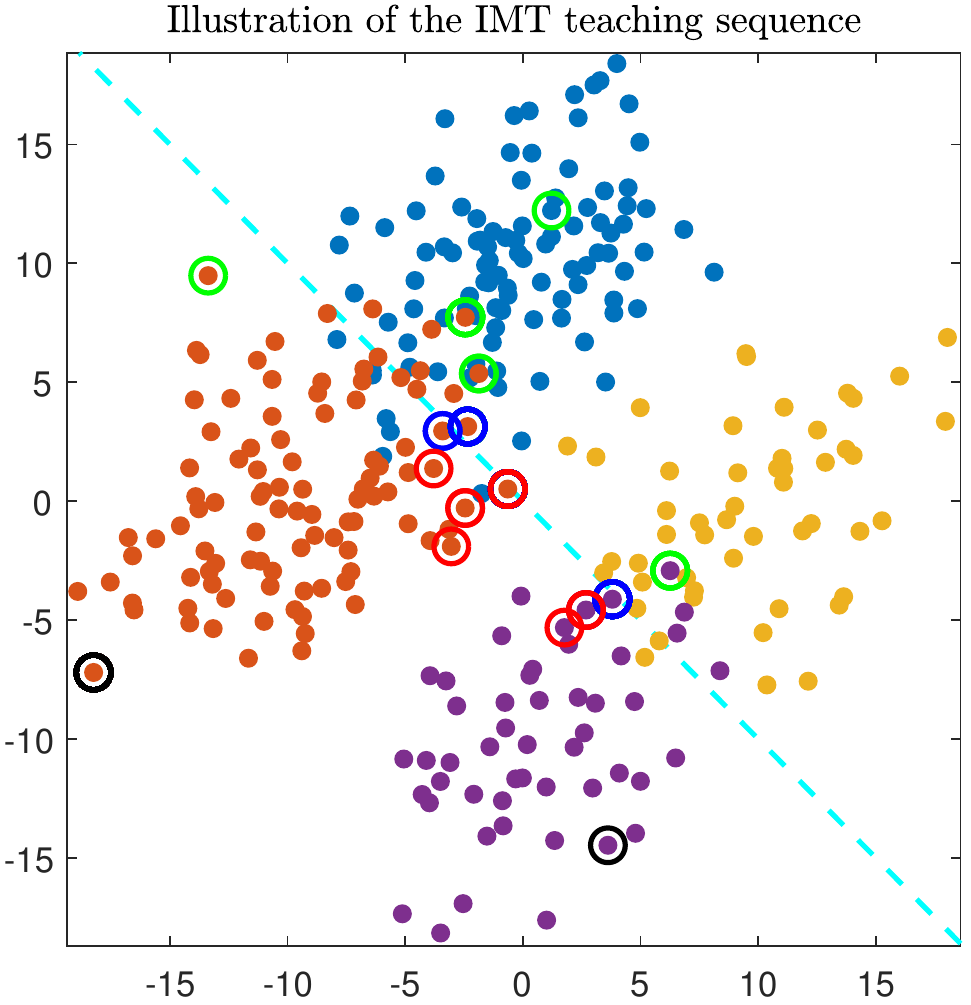}\end{subfigure}&
\begin{subfigure}{0.25\textwidth} \hspace{-12mm} \includegraphics[width=1\columnwidth]{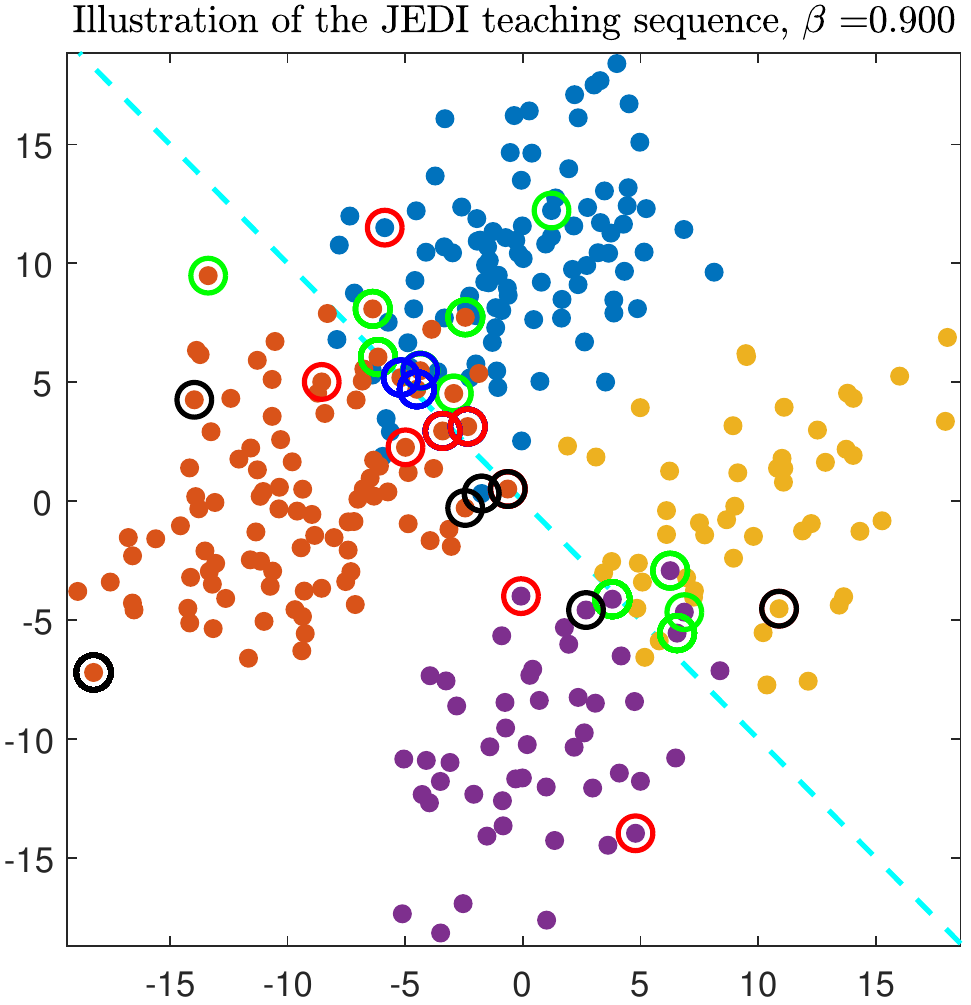}\end{subfigure}\\ 
\end{tabular}
\caption{Left: Comparison of convergence. Right: The unique examples in the teaching sequences of SGD, IMT, and JEDI. Selected examples: Green (iter. 1-100), Blue (iter. 101-200), Red (iter. 201-300), and Black (iter. 301-500). Cyan dashed line: optimal classification hyperplane.}
\label{ToyData}
\end{figure*}
\subsection{Teaching Algorithm}
The details of the JEDI algorithm using harmonic function estimation are provided in \textbf{Algorithm} \ref{secondAlg}. It is given the learner's memory decay rate, target concept, learning rate, teaching set as input, and will output the personalized teaching sequence. It works as follows. We first initialize the iterator $t = 1$ and initial momentum $\textbf{v}_0 = \textbf{0}$. Then, in each teaching iteration, the JEDI teacher estimates the probability of incorrect labeling using harmonic function Eqn. (\ref{harmonic}). Next, the JEDI teacher searches through the teaching set $\Phi$ and finds the example ($\textbf{x}_t,y_t$) that minimizes the objective function in Eqn. (\ref{JEDIharmonic}) which uses the $f_s$ (where $s=1,\hdots,t-1$), and $\frac{1}{f_{-t}}$. Next, the learner performs the labeling on $\textbf{x}_t$ and the JEDI teacher updates affinity matrix $A$, diagonal matrix $D$, and label matrix $F_l$ using the methods described in Section \ref{harmonicSection}. At last, the teacher reveals the true label $y_t$ and the learner performs learning. The JEDI teaching with harmonic function estimation will stop when the maximum number of iterations has been reached. 
\begin{table}[h]
\centering 
\setlength\tabcolsep{1.5pt} 
\scalebox{0.9}{
\hspace{-2mm}
\begin{tabular}{|c|c|c|c|}
\hline 
\textbf{Data set}      & \textbf{\# Examples (Teach)} & \textbf{\# Examples (Evaluate)} & \textbf{\# Features} \\ \hline \hline
10D-Guassian & 400                 & 1600                   & 10          \\ \hline
Comp. vs. Sci & 375                 & 1500                   & 150         \\ \hline
Rec. vs. Talk & 369                 & 1475                   & 150         \\ \hline
\end{tabular}
}
\vspace{2mm}
\caption{Statistics of the three data sets with synthetic learners.}
\label{dataSTAT}
\vspace{-8mm}
\end{table}

\section{Experiments}
In this section, we first conduct the experiments on a toy data set to illustrate the trade-off between diversity and usefulness using JEDI with omniscient teacher. Then, we evaluate the convergence and the performance of JEDI with harmonic function estimation on three data sets using synthetically generated learners. At last, we evaluate the effectiveness of JEDI teaching on two real-world data sets by hiring and teaching a group of crowdsourcing workers.
\subsection{Toy Data Set Visualization}
In order to visualize the selected examples of the teaching sequence, we apply three different teaching methods: SGD, Iterative Machine Teaching (IMT) \cite{DBLP:conf/icml/LiuDHTYSRS17}, and JEDI (omniscient teacher) on a 2D Gaussian mixture data set. This data set is draw from two Gaussian mixture distributions (one positive class and one negative class):
\begin{equation}
    \begin{split}
        p_+(\textbf{x}) &= \frac{2}{3} \mathcal{N}(\textbf{x}|\mu_1, \Sigma_1) + \frac{1}{3} \mathcal{N}(\textbf{x}|\mu_2, \Sigma_2) \\
        p_-(\textbf{x}) &= \frac{2}{3} \mathcal{N}(\textbf{x}|\mu_3, \Sigma_1) + \frac{1}{3} \mathcal{N}(\textbf{x}|\mu_4, \Sigma_2)
    \end{split}
\end{equation}

\noindent where the parameters are $\mu_1 = (0, 8)$, $\mu_2 = (8, 0)$, $\mu_3 = (-8, 0)$, $\mu_4 = (0, -8)$, $\Sigma_1 = [12 \; 6; 6 \; 12]$ and $\Sigma_2 = [10 \; 5; 5 \; 10]$. The number of examples in each class is 150. To guarantee a fair comparison, the target concepts $\textbf{w}_*$ are the same for IMT and JEDI (SGD has no target concept), and the initial concept $\textbf{w}_0$, the first teaching example ($\textbf{x}_1,y_1$), the step size $\eta = 0.03$ are set to be identical for all three methods.

The numbers of unique teaching examples of SGD, IMT, and JEDI are 182, 16, and 27 respectively. As we expected, SGD almost selects half of all examples for teaching. Thus, we visualize the selected examples of SGD with a stride size of 5. From the visualization results, there are several interesting observations. First, the convergence rate of IMT and JEDI are comparable and both have better convergence than SGD. Second, the selected teaching examples of JEDI are much diverse than those of IMT, yet the convergence speed of JEDI is guaranteed. As we can see, in the first 100 iterations (when teaching converges very fast), the unique teaching examples of IMT mainly focus on the upper left two data distributions, but the teaching examples of JEDI are more diversely distributed in all data distributions. The advantages of JEDI is significant especially when the examples are drawn from a mixture distribution because JEDI objective considers both the usefulness and the diversity of the teaching examples. Third, the JEDI selected examples are symmetrically scattered over the optimal classification hyperplane and have more appearances on the data distribution boundaries which reflects our theoretical analysis regarding the exploration and exploitation actions.

\begin{figure*}[!t]
\begin{tabular}{ccc}
\begin{subfigure}{0.33\textwidth} \hspace{-2mm} \includegraphics[width=1\columnwidth]{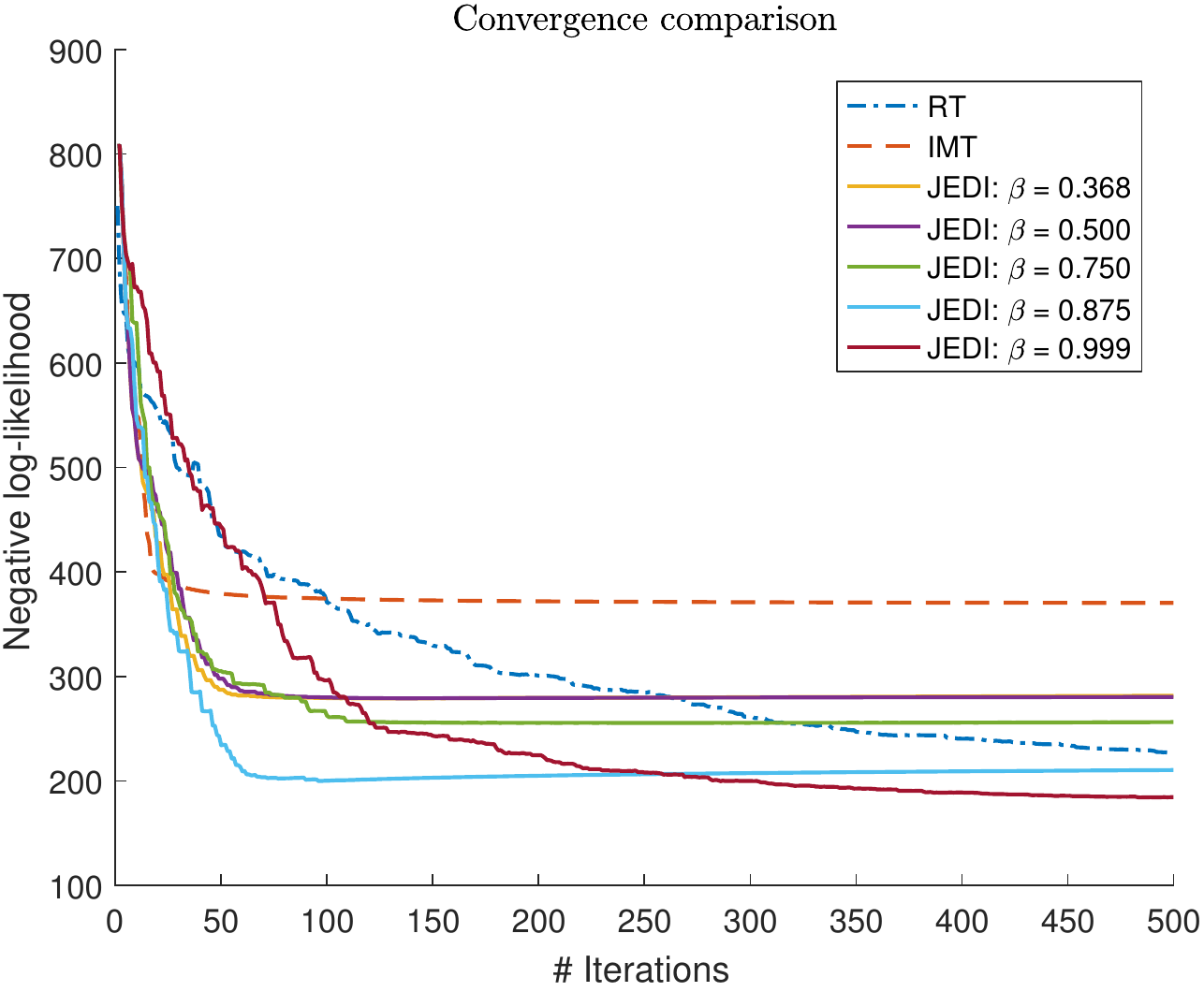}\end{subfigure}&
\begin{subfigure}{0.33\textwidth} \hspace{-4mm} \includegraphics[width=1\columnwidth]{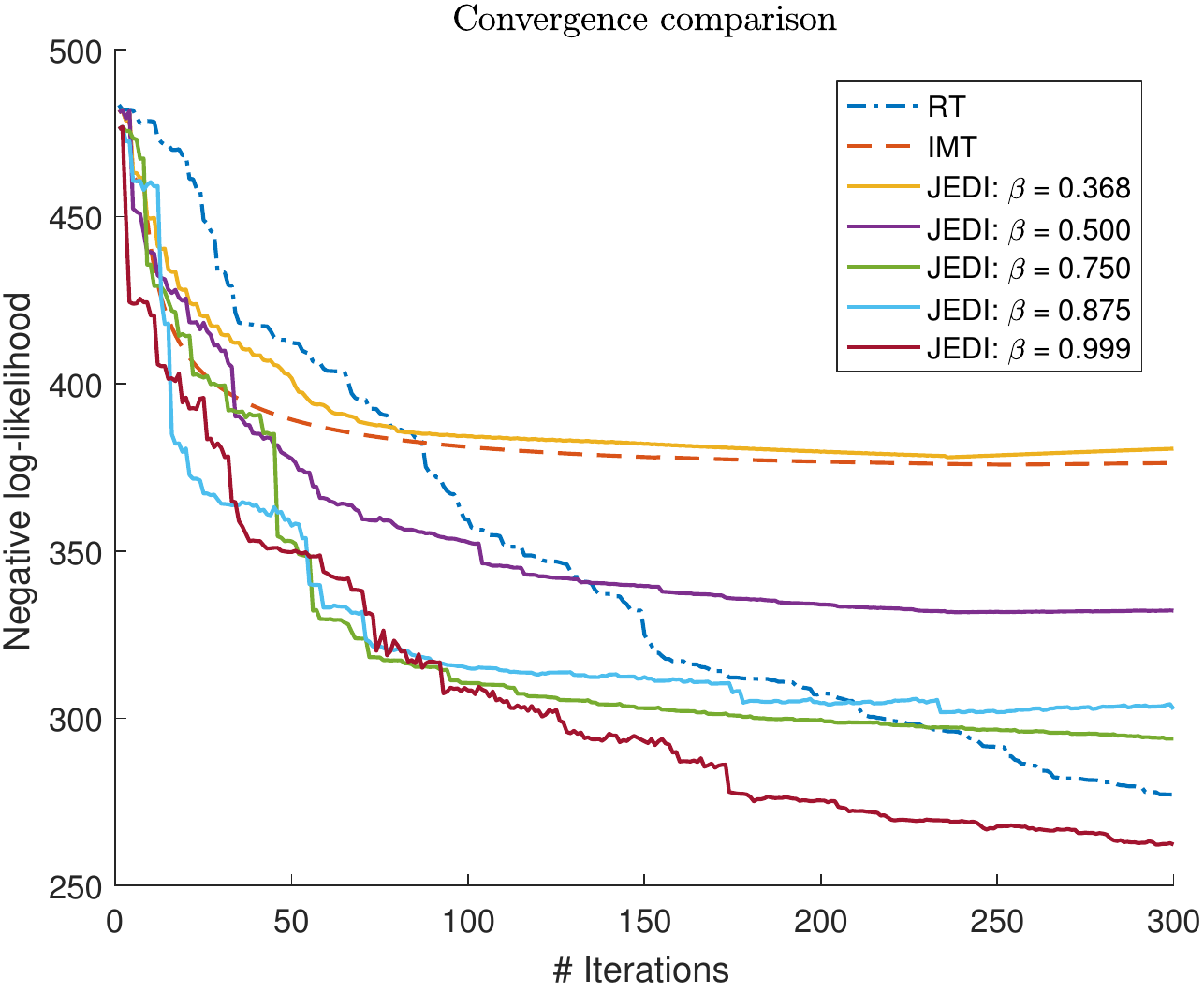}\end{subfigure}&
\begin{subfigure}{0.33\textwidth} \hspace{-6mm} \includegraphics[width=1\columnwidth]{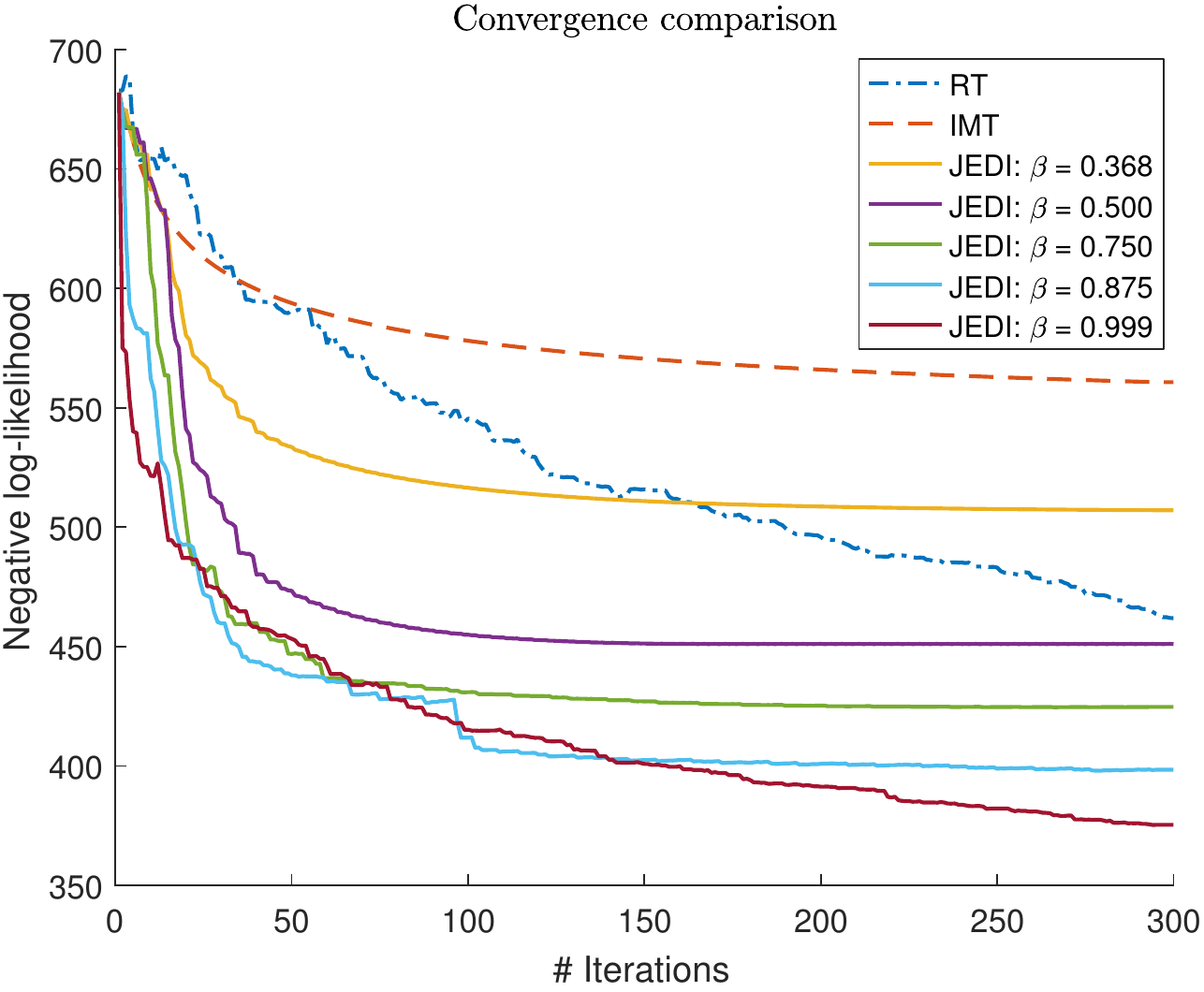}\end{subfigure}\\ 
\end{tabular}
\caption{Convergence plots of data sets: 10D-Gaussian (left), Comp.vs.Sci (middle), Rec.vs.Talk (right)}
\label{SyntheticData}
\end{figure*}

\begin{table*}[h]
\centering
\scalebox{0.9}{
\begin{tabular}{|c|c|c|c|c|c|c|}
\hline
\multirow{2}{*}{} & \multicolumn{2}{c|}{\textbf{10D-Guassian}} & \multicolumn{2}{c|}{\textbf{Comp. vs. Sci}} & \multicolumn{2}{c|}{\textbf{Rec. vs. Talk}} \\ \cline{2-7} 
 & \textit{\begin{tabular}[c]{@{}c@{}}\# Unique Teaching\\ Examples\end{tabular}} & \textit{\begin{tabular}[c]{@{}c@{}}Evaluation \\ Accuracy\end{tabular}} & \textit{\begin{tabular}[c]{@{}c@{}}\# Unique Teaching\\ Examples\end{tabular}} & \textit{\begin{tabular}[c]{@{}c@{}}Evaluation \\ Accuracy\end{tabular}} & \textit{\begin{tabular}[c]{@{}c@{}}\# Unique Teaching\\ Examples\end{tabular}} & \textit{\begin{tabular}[c]{@{}c@{}}Evaluation \\ Accuracy\end{tabular}} \\ \hline
\textbf{RT} & 283 & 0.7625 & 210 & 0.7247 & 208 & 0.6041 \\ \hline
\textbf{IMT} & 13 & 0.7075 & 9 & 0.6887 & 5 & 0.5607 \\ \hline
\textbf{JEDI: $\beta$ = 0.368} & 16 & 0.7575 & 14 & 0.6827 & 12 & 0.5736 \\ \hline
\textbf{JEDI: $\beta$ = 0.500} & 18 & 0.7575 & 16 & 0.6927 & 17 & 0.6102 \\ \hline
\textbf{JEDI: $\beta$ = 0.750} & 30 & 0.7613 & 27 & 0.7207 & 23 & 0.6292 \\ \hline
\textbf{JEDI: $\beta$ = 0.875} & 53 & 0.7775 & 45 & 0.7007 & 34 & 0.6617 \\ \hline
\textbf{JEDI: $\beta$ = 0.999} & 197 & 0.7825 & 64 & 0.7307 & 50 & 0.6915 \\ \hline
\end{tabular}
}
\vspace{2mm}
\caption{Results of three data sets with synthetic learners}
\label{experiment2}
\end{table*}
\subsection{Adaptive Teaching with Synthetic Learners}
The teacher in this group of experiments doesn't have access to the learners' concepts. Thus, teachers of Random Teaching (RT), IMT, and JEDI can only estimate a learner's concept using harmonic function. We evaluates these methods on three data sets, as shown in Table \ref{dataSTAT}, which include a 10-dimensional Gaussian data set and two text data sets from 20 Newsgroups. The learners are randomly generated as a vector that has the same length as the example features. After performing the learning, a random Gaussian noise will be added to the learner's concept vector to simulate the learning uncertainty. Below are the settings of the data sets:
\begin{itemize}
    \item \textbf{10D-Gaussian}: The means are $\mu_1 = (-0.6, \hdots, -0.6)$, $\mu_2 = (0.6, \hdots, 0.6)$ and the diagonal of its covariance matrix has random values between 1 to 10. Initial step size is set to $\eta_0 = 0.03$ and it is gradually decreased as $\eta_t = \frac{20}{20+t} \eta_0$ where $t$ is the teaching iterations. 
    \item \textbf{Comp.vs.Sci} and \textbf{Rec.vs.Talk}: We use their largest classification tasks for each of them. The extracted features are TF-IDF. Initial step size is set to $\eta_0 = 0.03$ and it is gradually decreased as $\eta_t = \frac{200}{200+t} \eta_0$. 
\end{itemize}

All three data sets are randomly split into $20\%$ as the teaching set (has true labels for the teacher) and $80\%$ as the evaluation set. For the JEDI teacher, we have five different synthetic learners with $\beta \in \{0.368, 0.5, 0.75, 0.875, 0.999 \}$ and these values represent for learner with memory window size of $\{1, 2, 4, 8, \mathrm{Inf} \}$. To guarantee a fair comparison, all learners have the same initial concept $\textbf{w}_0$, first teaching example $(\textbf{x}_1, y_1)$, and target concept $\textbf{w}_*$ (except RT, which doesn't have target concept). As we can see from the experiment results in Figure \ref{SyntheticData} and Table \ref{experiment2}, the IMT and JEDI have consistently better convergence speed than RT. We also observe that the JEDI learners with exponential decay memories usually outperforms the no memory learners of IMT in terms of either the convergence speed or the evaluation accuracy. Meanwhile, as we expected, the JEDI learners with larger $\beta$ has slower convergence speed, increasing number of unique teaching examples, and possibly better performance in the evaluation. 

\subsection{Adaptive Teaching with Real Human Learners}
The teaching experiments with real human learners are designed for crowdsourcing workers to learn the concept of labeling different animals images \cite{DBLP:conf/sdm/ZhouYH17} based on the animal breed. We utilize two categories of images (Cat and Canidae) and the label of each image is either \textit{domestic} or \textit{wild}. Following the same convention of \cite{DBLP:conf/sdm/ZhouYH17}, each image is represented by the top 110 TF-IDF features using bag-of-visual-words extracted from a three-level image pyramid. The experiment is designed to have three modules: \textit{memory length estimation, interactive teaching}, and \textit{performance evaluation}. 

\vspace{2mm}
\noindent \textbf{Memory length estimation}: For each crowdsourcing learner, his/her memory decay rate $\beta$ is not available to the JEDI teacher beforehand. Thus, we propose to use the images sorting task to estimate each learner's $\beta$. This sorting task shows increasing number (from 2 to 9) of randomly ordered images to the learner for a few seconds (from 3s to 10s), and then ask the learner to recover the correct order of these images after random shuffling. Each learner's memory decay rate is ad-hocly estimated as $\beta = 1 - \frac{1}{ \overline{n} }$ where $\overline{n}$ is the mean of the maximum number of ordered images that this learner can recover. There are three memory length estimation trials, we drop the one with smallest memory length and take the mean of the remaining two. 

\vspace{2mm}
\noindent \textbf{Interactive teaching}: In total, we hired 58 crowdsourcing workers (30 for cat data set, 28 for canidae data set). All human workers are graduate students who are hired from Arizona State University and have machine learning background. Besides RT, IMT, and JEDI, we also add the Expected Error Reduction (EER \cite{DBLP:conf/cvpr/JohnsAB15}) as a comparison teaching method which is specially designed for teaching real human learners. To guarantee a fair comparison, each learner will be assigned with one of these four teachers using round-robin scheduling. The numbers of teaching images of JEDI are 20, 30, or 40 if $\overline{n}$ falls into these ranges $[2, 4.5], (4.5, 6.5], (6.5, 9]$ respectively. The numbers of teaching images of RT, IMT, and EER are fixed as 30. It should be noticed that the ideal number of teaching examples for different teaching tasks could have a large variation due to the various learning abilities of the learners, different scales of the data set, etc. We have left the exploration of this specific setting to the future work. To deal with the "cold start" issue, the first five teaching examples are randomly selected from the teaching set. All workers know that they will be taught, and the incentive for workers to learn is by giving double payment if they have the top 20 percent labeling accuracies among all workers. 

\vspace{2mm}
\noindent \textbf{Performance evaluation}: For each crowdsourcing worker, they are asked to label 100 images (50 domestic/50 wild) in the evaluation stage. The purpose of teaching is to let the human learner grasp the idea of this domestic/wild classification concept. Thus, we propose to use \textit{teaching gain} as the evaluation metric which is defined as the labeling accuracy during evaluation minus the labeling accuracy (of these first seen teaching examples) during teaching. From the plots shown in Figure \ref{realdata}, we observe that the teachers with an explicit learner's model (e.g. IMT and JEDI) performs better than model-agnostic teachers and the JEDI teacher consistently performs the best over all teaching strategies. Interestingly, we also see that JEDI is the only teacher that has positive teaching gain on the cat data set. One possible explanation is that cat breed classification is a very difficult task and these crowdsourcing workers without given a properly selected teaching sequence could hardly grasp this labeling concept. On the other hand, the human learners have difficulties to visually differentiate the wild and domestic cats is because of that the subtle difference of their discriminative features could be easily forgotten by human learners. However, JEDI is the only model that could capture this memory loss by assuming that each learner's memory has an exponentially decayed rate. 
\begin{figure}[!t]
\begin{tabular}{cc}
\begin{subfigure}{0.23\textwidth}\hspace{-2mm} \includegraphics[width=1\columnwidth]{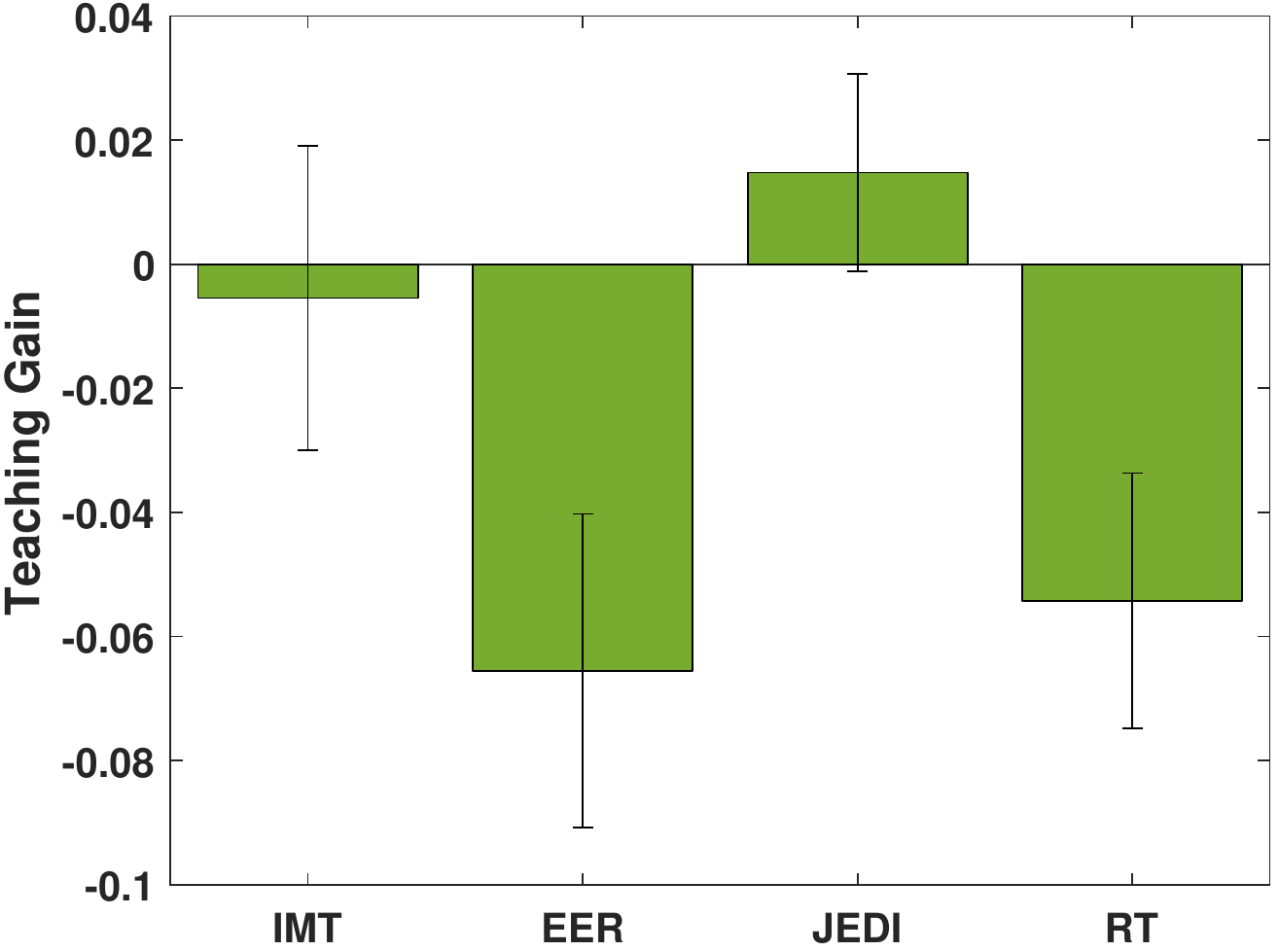}\label{cat}\end{subfigure}&
\begin{subfigure}{0.23\textwidth}\hspace{-4mm} \includegraphics[width=1\columnwidth]{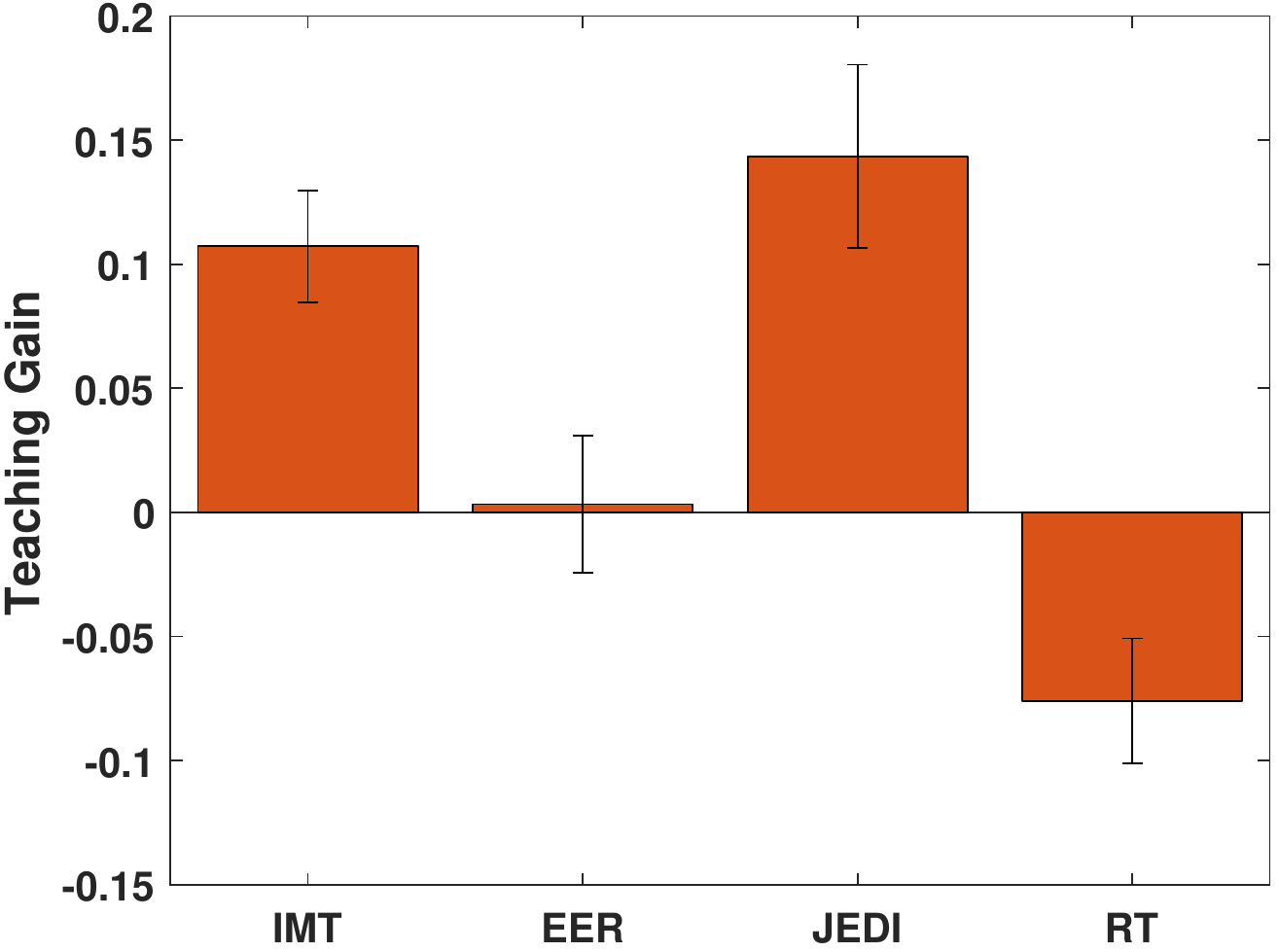}\label{dog}\end{subfigure}\\ 
\end{tabular}
\caption{Teaching Gain of Cat (left) and Canidae (right)}
\label{realdata}
\end{figure}
\section{Conclusion}
In this paper, we study the problem of crowd teaching which applies the machine teaching paradigm into the crowdsourcing applications. The proposed JEDI teaching framework advances the state-of-the-art techniques in multiple dimensions in terms of human memory decay modeling, converging speed, teaching comprehensiveness and teaching accuracy, etc. The experimental results on several data sets with synthetic learners and crowdsourcing workers show the superiority of JEDI teaching. Future work could focus on multiple directions. Adapting to the multi-class teaching or finding an alternative method to perform concept estimation are two possible straightforward extensions. Furthermore, other promising explorations could be teaching the gray-box learners (using different loss functions or different learning procedures), teaching the black-box learners (model agnostic) and teaching with human interpretable explanations (e.g., area of interest on images or key phrases of documents).


\begin{acks}
This work is supported by National Science Foundation under Grant No. IIS-1552654, Grant No. IIS-1813464 and Grant No. CNS-1629888, the U.S. Department of Homeland Security under Grant Award Number 2017-ST-061-QA0001, and an IBM Faculty Award. The views and conclusions are those of the authors and should not be interpreted as representing the official policies of the funding agencies or the government.
\end{acks}

\vspace{0.5mm}
\bibliographystyle{ACM-Reference-Format}
\bibliography{JEDI.bib}
\flushend

\end{document}